\documentclass[12pt]{article}
\usepackage[utf8]{inputenc}
\usepackage{amsmath, amsfonts, amssymb}
\usepackage{graphicx}
\usepackage{geometry}
\usepackage{pifont}
\usepackage{titlesec}
\usepackage{multirow}
\usepackage{booktabs}
\usepackage{tabularx}
\usepackage{threeparttable}
\usepackage{makecell}
\usepackage{adjustbox}
\usepackage{caption}
\usepackage{enumitem}
\usepackage{xcolor}
\usepackage{tikz}
\usetikzlibrary{arrows.meta, positioning, calc, fit, backgrounds}
\usepackage{hyperref}
\hypersetup{colorlinks=true, linkcolor=blue!55!black, citecolor=green!35!black, urlcolor=blue!55!black}

\usepackage{amsthm}
\newtheorem{definition}{Definition}
\newtheorem{proposition}{Proposition}
\theoremstyle{remark}

\newcommand{\Mem}{\mathcal{M}}                    
\newcommand{\Tool}{\mathcal{T}}                   
\newcommand{\Rew}{\mathcal{R}}                    
\newcommand{\Onto}{\mathcal{O}}                   
\newcommand{\Data}{\mathcal{D}}                   
\newcommand{\Dist}{\mathcal{Q}}                   
\newcommand{\Leak}{\mathcal{P}}                   
\newcommand{\Comm}{\mathcal{C}_{\mathrm{comm}}}   
\DeclareMathOperator{\Update}{Update}
\DeclareMathOperator{\Agg}{Agg}
\DeclareMathOperator{\Adapt}{Adapt}

\titleformat{\section}{\large\bfseries}{\thesection.}{1em}{}
\titleformat{\subsection}{\normalsize\bfseries}{\thesubsection}{1em}{}

\title{\textbf{Federated Agent Optimization}}
\author{
Qiang Yang$^{1,*}$,
Zhiqiang Kou$^{1}$,
Xueyi Zhang$^{3}$,
Dong-Dong Wu$^{4}$,\\
Hanlin Gu$^{5}$,
Jing Guo$^{1}$,
 Yang Liu$^{1}$,
Di Jiang$^{1}$,
Qian Xu$^{2}$
\\[5pt]
\small $^{1}$The Hong Kong Polytechnic University, Hong Kong SAR, China \\
\small $^{2}$Tengen AI, Hong Kong SAR, China \\
\small $^{3}$National University of Singapore, Singapore \\
\small $^{4}$The University of Tokyo, Japan \\
\small $^{5}$Hong Kong University of Science and Technology, Hong Kong SAR, China \\
\small $^{*}$Corresponding author: \texttt{profqiang.yang@polyu.edu.hk}
}

\date{}

\begin{document}
\maketitle

\begin{abstract}
Large language model (LLM) agents increasingly operate in private environments and accumulate valuable experience from task execution, tool use, feedback, and local knowledge. Yet such experience is distributed across organizations and cannot be directly shared because of privacy and proprietary constraints. Conventional federated learning is insufficient for this setting, as agent capabilities extend beyond model parameters to memory, tools, rewards, skills, and structured knowledge.
In this paper, we formulate \textbf{Federated Agent Optimization (FAO)}, which studies how distributed agents can collaboratively improve through controlled information exchange while keeping raw data, complete trajectories, and private knowledge local. We define FAO as a multi-objective problem balancing agent utility, privacy leakage, and communication cost, and organize its optimization space across policy, memory, tool use, reward, and structured knowledge and skills. We further characterize how private experience can be abstracted, protected, aggregated, and adapted into transferable capabilities, providing a unified view of how agents can benefit from one another without direct experience sharing. Finally, we identify the key challenges of FAO and outline several promising directions for future research toward trustworthy federated agent systems.
\end{abstract}
\section{Introduction}

In recent years, large language model (LLM)-driven agents~\cite{guo2024large,naveed2025comprehensive,wang2024survey} have gradually evolved from general-purpose conversational systems~\cite{allen2001architecture} toward executing complex tasks in real-world environments, becoming an important paradigm for enterprise AI systems~\cite{xu2025agentic}. An increasing number of organizations are deploying agents in applications such as customer service~\cite{verhagen2014virtual,gnewuch2017towards,han2023bots}, software development~\cite{qian2024chatdev,bresciani2004tropos}, data analysis~\cite{hong2025data,rasheed2024can}, knowledge management~\cite{van2003towards,wu2001software,roda2003using}, office automation~\cite{wang2024officebench}, financial services~\cite{rizinski2025ai}, healthcare assistance~\cite{preum2021review,milne2020effectiveness,isern2010agents,laranjo2018conversational}, and business process execution~\cite{li2010distributed,jennings2000autonomous,ha2006development,coria2014intelligent}. Unlike traditional language models that primarily focus on content generation~\cite{li2024pre}, enterprise agents~\cite{pan1991intelligent} typically need to integrate internal knowledge~\cite{huang2025reinforced}, business data, proprietary tools, and operational workflows to autonomously perform task planning~\cite{putta2024agent,wang2023describe}, tool invocation~\cite{chen2024re}, environment interaction~\cite{beer1995dynamical,chen2026test}, and outcome evaluation~\cite{zhuge2024agent}. As their applications continue to expand, agents are evolving from standalone language generation models~\cite{dong2022survey} into complex intelligent systems~\cite{zhang2004agent} that can perceive their environments, take actions, and continuously adapt to changing business requirements.

During continuous task execution, enterprise agents generate large volumes of new workflows, execution trajectories~\cite{barke2026agentrx,guo2026se}, tool-use records~\cite{wang2026agent,doshi2026towards}, and success or failure feedback on a daily basis~\cite{ponnusamy2022feedback}. These real-world interaction experiences can help agents continuously refine task planning~\cite{putta2024agent,wang2023describe}, tool utilization, and decision-making strategies~\cite{rizk2018decision}, thereby improving their practical task performance over time. Meanwhile, as agents are deployed across different organizations~\cite{deng2025exploring,zuo2026qwen}, departments, and user environments, valuable experience becomes increasingly distributed across isolated local systems. Different agents often face similar or related tasks, and their accumulated successful experiences, failure patterns, and tool-use strategies are naturally complementary, creating substantial potential for mutual improvement~\cite{li2022pmic}. However, such experience is often tightly coupled with user data, internal knowledge, business rules, proprietary tools, and operational workflows, making it difficult to directly share raw data or complete execution trajectories~\cite{cheng2026privact}. As a result, large amounts of potentially reusable agent experience remain confined to local environments, limiting effective collaboration among agents and preventing the full value of cross-environment experience from being realized.

Federated learning~\cite{wen2023survey,li2020review,zhang2021survey} provides a natural technical foundation for mitigating the experience isolation described above. It enables multiple clients~\cite{geyer2017differentially} to collaboratively learn by exchanging and aggregating model updates while keeping raw data local~\cite{deng2021fair}, thereby avoiding the direct centralization of sensitive information. However, traditional federated learning primarily focuses on model parameters, whereas the capabilities of enterprise agents are determined not only by the underlying model but also by their decision policies, memories, tool use, environmental feedback, and local knowledge and business rules. Therefore, in agent settings~\cite{andriushchenko2025agentharm}, parameter aggregation alone is insufficient to enable effective cross-agent collaboration. A more fundamental question is how to transform the useful experience, skills, and feedback acquired by one agent during real-world execution into shareable capabilities that can benefit other agents without exposing local private information. Existing federated learning paradigms have not yet systematically addressed this problem and still lack a unified optimization framework for the broader capabilities of agents.

We refer to this class of problems as \textbf{Federated Agent Optimization (FAO)}. FAO studies how multiple agents operating in different private environments can collaboratively improve through controlled information exchange while keeping raw data, complete interaction trajectories, and local private knowledge decentralized. Unlike conventional federated learning, which primarily optimizes model parameters, FAO considers the broader capabilities of an agent, including its policy, memory, tool use, reward mechanism, and structured knowledge. We formulate FAO around three interrelated objectives: improving local agent utility, reducing privacy leakage during collaboration, and controlling communication and system costs. In addition, effective federation should provide meaningful collaboration gains to individual participants rather than improving only aggregate performance.

In this paper, we establish a unified framework for studying FAO. First, we formally define local agents, private environments, federated protocols, information-sharing boundaries, and a multi-objective optimization formulation that jointly captures performance, privacy, and communication cost. Second, we organize existing and emerging approaches according to the agent capability being federated, including policies, memory, tool use, rewards, and structured knowledge and skills. Third, we characterize how private local experience can be transformed into transferable capabilities through abstraction, protection, aggregation, and local adaptation, and further examine when such transfer leads to mutual improvement across agents. Building on this framework, we identify key challenges in balancing performance, privacy, and communication, ensuring fair contribution and benefit allocation, and reliably evaluating federation gains. Finally, we highlight several open research directions toward trustworthy, efficient, and continuously improving federated agent systems.

\paragraph{Paper Roadmap.}
The remainder of this paper is organized as follows. Section~2 reviews the most relevant research areas. Section~3 formally defines Federated Agent Optimization (FAO). Section~4 presents a taxonomy of FAO according to the agent components being collaboratively optimized. Section~5 further investigates how local experience can be transformed into transferable capabilities. Section~6 summarizes the key challenges of FAO. Finally, Section~7 discusses  future works.

\vspace{0.5em}
\noindent

\section{Related Work}

\paragraph{Federated Learning and Federated Large Models}
Federated learning~\cite{wen2023survey,li2020review,zhang2021survey} enables multiple clients~\cite{geyer2017differentially} to collaboratively train models while keeping raw data local, providing a general paradigm for privacy-preserving distributed learning~\cite{jayaraman2018distributed}. Early studies~\cite{qi2024model} mainly focused on model aggregation under heterogeneous data distributions, while subsequent work further explored communication efficiency~\cite{chen2021communication}, personalization~\cite{tan2022towards}, robustness~\cite{huang2024federated,li2021ditto}, and privacy protection~\cite{wei2020federated,li2021survey}. With the emergence of foundation models and large language models~\cite{zhao2026survey,minaee2024large}, federated learning has been extended to large-scale model adaptation~\cite{kuang2024federatedscope}, where parameter-efficient techniques such as federated fine-tuning~\cite{Wang2024FLoRA,wu2024fedbiot}, LoRA~\cite{sun2024improving,bian2025lora}, prompt tuning~\cite{zhao2023fedprompt,che2023federated}, and knowledge distillation~\cite{salman2025knowledge} are increasingly adopted to reduce computation and communication costs. Despite these advances, existing federated large-model methods still primarily treat model parameters or representations as the main optimization objects. In contrast, an agent~\cite{guo2024large} typically involves not only an underlying model, but also memory, tool use, rewards, skills, and external knowledge. Therefore, federating a large model is not equivalent to federating an agent, motivating a broader optimization paradigm that considers the complete capabilities of agents.

\paragraph{Multi-Agent Systems and Agent Reinforcement learning}
Multi-agent systems~\cite{he2025llm,yue2025masrouter,li2025agent} study how multiple agents coordinate, communicate, divide tasks, and make joint decisions in shared or interacting environments. With the rise of large language models, recent LLM-based multi-agent systems~\cite{he2025llm,yang2026agentnet,chen2026diversity} have further explored role specialization, discussion, negotiation, and mutual feedback to improve performance on complex tasks. In parallel, agent learning focuses on continuously improving agent capabilities through execution trajectories~\cite{tian2023learning}, environmental feedback~\cite{yuan2023survey}, reinforcement learning~\cite{zhu2024survey,zhang2025reinforcement}, self-reflection~\cite{bo2024reflective}, and teacher-agent guidance~\cite{ye2022model}. Despite this progress, most existing multi-agent and agent-learning methods assume that agents can directly exchange messages, trajectories, or feedback during collaboration, while privacy constraints and data isolation across organizations or private environments are rarely considered. This differs fundamentally from federated agent optimization, where agents are expected to benefit from each other while keeping local data, execution experience, and private knowledge decentralized.

\paragraph{Privacy-Preserving AI} Privacy-preserving AI~\cite{nguyen2025privacy} aims to reduce the risk of sensitive information leakage while enabling effective model training and inference. Common techniques include differential privacy~\cite{alzoubi2025differential}, secure aggregation~\cite{ma2023trusted}, homomorphic encryption~\cite{khan2024privacy}, secure multi-party computation~\cite{saha2024application}, and access control~\cite{ji2026taming}. These techniques have been widely adopted in distributed learning~\cite{chen2021distributed,cao2023communication}, cloud-based inference~\cite{guo2018cloud}, and cross-organization collaboration~\cite{chang2021maintenance,norta2015establishing} to protect training data, model updates, and intermediate representations. However, existing privacy-preserving methods mainly focus on data, model parameters, or inference processes. In agent systems, sensitive information may also be embedded in execution trajectories, memories, tool interactions, business rules, and skill representations, which calls for new protection mechanisms tailored to collaborative agent optimization.

\section{Problem Definition: Federated Agent Optimization}
\label{sec:problem}

We formalize Federated Agent Optimization (FAO) for $K$ clients in the \emph{horizontal} setting, with the notation of Appendix~\ref{app:notation}.
All clients draw their tasks from one task space $\mathcal{X}$, and client $k$ has a task distribution $\Dist_k$, a utility $u_k$, and a private dataset $\Data_k$ of $n_k$ tasks with the trajectories logged by its earlier agents,
\begin{equation}
\label{eq:horizontal}
\operatorname{supp}(\Dist_k) \subseteq \mathcal{X}, \qquad
\Data_k = \big\{(x_{k,i}, \tau_{k,i})\big\}_{i=1}^{n_k}, \quad x_{k,i} \sim \Dist_k, \qquad k = 1, \dots, K.
\end{equation}
The distributions $\Dist_k$ and utilities $u_k$ may differ across clients and coincide in the homogeneous case.
$\Data_k$ also holds the raw records from which the agent builds its memory and knowledge, and none of it may leave the client.

\subsection{Local Agents and Private Data}
\label{sec:local_agent}

Client $k$ starts from an initial agent $A_k^0$, and each of its agents has five components~\cite{Sumers2024CoALA},
\begin{equation}
\label{eq:agent}
A_k = \big(\underbrace{\pi_k}_{\text{policy}},\ \underbrace{\Mem_k}_{\text{memory}},\ \underbrace{\Tool_k}_{\text{tools}},\ \underbrace{\Rew_k}_{\text{reward}},\ \underbrace{\Onto_k}_{\text{structured knowledge}}\big).
\end{equation}
The policy $\pi_k = \pi_{\theta_k}$ maps a context to an action, and $\theta_k$ is continuous (weights, adapters) or discrete (prompts, workflow graphs).
The memory $\Mem_k$ stores experience for retrieval~\cite{Shinn2023Reflexion,Zhao2024ExpeL,Zhang2025MemorySurvey}, $\Tool_k$ is the set of callable tools with their descriptions, and $\Rew_k$ is the local evaluator.
$\Onto_k$ holds curated structured knowledge, namely ontologies, rules on the business and the environment, and skills, which, unlike memory items, are stated explicitly rather than retrieved by similarity.

On a task $x$, an agent $A$ produces a trajectory $\tau$ of $H$ steps, and its performance on client $k$ is the expected utility under the task distribution of the client,
\begin{align}
\tau &= (s_h, a_h, o_h, r_h)_{h=1}^{H} \sim p_{A}(\cdot \mid x), \label{eq:trajectory}\\
U_k(A;\Dist_k) &= \mathbb{E}_{x \sim \Dist_k}\, \mathbb{E}_{\tau \sim p_{A}(\cdot \mid x)} \big[ u_k(\tau) \big],
\qquad u_k(\tau) \in [0,1], \qquad \Rew_k(\tau) \approx u_k(\tau). \label{eq:utility}
\end{align}
Here $s_h$ is the context, $a_h$ the action (a message or a tool call), $o_h$ the observation, and $r_h$ the feedback, which is often given only at the last step.
Since $u_k$ is observed only through outcomes and human feedback, the agent is optimized against the estimate $\Rew_k$.

\subsection{Federated Protocol}
\label{sec:protocol}

A coordinator runs a protocol $\Phi$ that fixes the participants $\mathcal{S}^t$ of each round, the rules $\Update$, $\Agg$, and $\Adapt$, the protection of the uploads, and the schedule of messages.
Round $t = 0, \dots, T-1$ consists of a local update, an aggregation, and a local adaptation,
\begin{align}
\Delta_k^t &= \Update\big(A_k^t, \Data_k; \Phi\big)
= \big(\Delta_k^{\pi},\, \Delta_k^{\Mem},\, \Delta_k^{\Tool},\, \Delta_k^{\Rew},\, \Delta_k^{\Onto}\big)^t, \qquad k \in \mathcal{S}^t, \label{eq:update}\\
G^{t+1} &= \Agg\big(\{\Delta_k^t\}_{k \in \mathcal{S}^t},\, G^t; \Phi\big)
= \big\{G_k^{t+1}\big\}_{k=1}^{K}, \qquad G_k = \big(G_k^{\pi}, G_k^{\Mem}, G_k^{\Tool}, G_k^{\Rew}, G_k^{\Onto}\big), \label{eq:aggregate}\\
A_k^{t+1} &= \Adapt\big(A_k^t, G_k^{t+1}; \Data_k\big), \qquad A_k^{+} = A_k^{T}, \qquad A_k^{\mathrm{loc}} = A_k^{T} \ \text{ under } \ G_k^{t} = \emptyset \ \ \forall t. \label{eq:adapt}
\end{align}
The update is \emph{typed}: it has one block for each component, any block may be empty, and nothing else leaves the client.
Typed artifacts were proposed for tool-routing knowledge~\cite{Chakraborty2026TypedArtifacts}, and Eq.~\eqref{eq:update} extends the idea to all components.
Federated averaging gives all clients one view, $G_k = G$, whereas personalized methods give each client its own view~\cite{Hou2026FedWorld,Yang2026FederatedSkill}; Appendix~\ref{app:extensions} covers protocols without a coordinator.
We drop the index $t$ where possible and write $\Delta_k$ for all uploads of client $k$, $A_k^{+}$ for its final agent, and $A_k^{\mathrm{loc}}$ for the agent that it obtains without the federation (Figure~\ref{fig:fao_overview}).

\begin{figure}[t]
\centering
\begin{tikzpicture}[
  font=\footnotesize,
  chip/.style={draw=blue!45!black, fill=white, rounded corners=1.5pt, minimum width=0.74cm, minimum height=0.48cm, inner sep=0pt},
  up/.style={-{Stealth[length=2.2mm]}, line width=0.9pt, red!65!black},
  down/.style={-{Stealth[length=2.2mm]}, line width=0.9pt, green!40!black},
]
\draw[draw=orange!60!black, fill=orange!10, rounded corners=3pt] (-7.3,1.45) rectangle (7.3,2.7);
\node[align=center] at (0,2.075) {\textbf{Coordinator} with protocol $\Phi$\\[2pt]
  $\{G_k\}_{k=1}^{K} = \Agg\big(\{\Delta_k\}_{k=1}^{K};\,\Phi\big), \qquad G_k = \big(G_k^{\pi}, G_k^{\Mem}, G_k^{\Tool}, G_k^{\Rew}, G_k^{\Onto}\big)$};

\foreach \x/\i in {-5.1/1, 0/k, 5.1/K} {
  \draw[draw=blue!50!black, fill=blue!5, rounded corners=3pt] (\x-2.2,0) rectangle (\x+2.2,-2.3);
  \node at (\x,-0.28) {\textbf{Client $\i$}: agent $A_{\i}$};
  \foreach \s/\dx in {\pi/-1.64, \Mem/-0.82, \Tool/0, \Rew/0.82, \Onto/1.64} {
    \node[chip] at (\x+\dx,-0.84) {$\s_{\i}$};
  }
  \node at (\x,-1.42) {private data $\Data_{\i}$};
  \node at (\x,-1.95) {$A_{\i}^{+}=\Adapt(A_{\i},G_{\i};\Data_{\i})$};
  \draw[up]   (\x-0.55,0)   -- node[left=1pt,  pos=0.25] {$\Delta_{\i}$} (\x-0.55,1.45);
  \draw[down] (\x+0.55,1.45) -- node[right=1pt, pos=0.75] {$G_{\i}$} (\x+0.55,0);
}
\node at (-2.55,-1.15) {$\cdots$};
\node at (2.55,-1.15) {$\cdots$};

\draw[dashed, gray!70!black] (-7.3,0.72) -- (7.3,0.72);
\node[anchor=south east, text=gray!40!black, font=\scriptsize\itshape, inner sep=1.5pt] at (7.3,0.72) {shared};
\node[anchor=north east, text=gray!40!black, font=\scriptsize\itshape, inner sep=1.5pt] at (7.3,0.72) {private};
\end{tikzpicture}
\caption{One round of FAO. Clients upload typed updates $\Delta_k$, the coordinator aggregates them into views $G_k$, and each client adapts its view. Utility is measured at $A_k^{+}$, leakage on $\Delta_k$, and communication cost on every message that crosses the line.}
\label{fig:fao_overview}
\end{figure}

\subsection{Multi-Objective Formulation}
\label{sec:objectives}

FAO pursues three objectives: performance, measured for each client on its own tasks so that personalization needs no separate term, privacy, and communication, whose cost counts every message of the protocol by its size $|\cdot|$,
\begin{equation}
\label{eq:objectives}
\Comm\big(\{\Delta_k\}_{k=1}^{K}, \Phi\big)
= \sum_{t=0}^{T-1} \sum_{k \in \mathcal{S}^t} \Big( \big|\Delta_k^t\big| + \big|G_k^{t+1}\big| \Big).
\end{equation}
Leakage is a property of the randomized mechanism $M_k$ that maps $\Data_k$ to the transcript of uploads, measured by a differential privacy guarantee~\cite{Dwork2014DPFoundations,abadi2016deep} or by the advantage of the strongest attack in a stated threat model $\mathfrak{A}$ over attacks without uploads,
\begin{equation}
\label{eq:leakage}
\begin{aligned}
&\text{transcript:} && M_k(\Data_k) = \big(\Delta_k^{0}, \dots, \Delta_k^{T-1}\big), \\
&\text{guarantee:} && \Leak_k = \inf\big\{\epsilon : \Pr[M_k(\mathcal{D}) \in S] \le e^{\epsilon} \Pr[M_k(\mathcal{D}') \in S] + \delta \ \ \forall S,\ \mathcal{D} \simeq \mathcal{D}'\big\}, \\
&\text{attack:} && \Leak_k = \max_{\mathcal{A} \in \mathfrak{A}} \Pr\big[\mathrm{hit}\big(\mathcal{A}(M_k(\Data_k))\big)\big] - \max_{\mathcal{A} \in \mathfrak{A}} \Pr\big[\mathrm{hit}\big(\mathcal{A}(\bot)\big)\big], \\
&\text{hit:} && \mathrm{hit}(y) \iff \exists\, r \in \Data_k : \mathrm{sim}(y, r) \ge \gamma,
\end{aligned}
\end{equation}
where $\simeq$ is one fixed adjacency relation, such as differing in one trajectory, $\delta$ is fixed, $\bot$ stands for no upload, and a hit recovers a record up to similarity $\gamma$.
The guarantee covers all $T$ rounds and hence their composition~\cite{Dwork2014DPFoundations}, and the advantage measures what the uploads add to one stated threat.
The two notions are not comparable, so an instantiation fixes one scalar metric, written $\Leak_k(M_k)$ or $\Leak_k(\Phi)$, and adds further notions as objectives or constraints.
Gradients~\cite{Zhu2019DLG,Petrov2024DAGER}, embeddings~\cite{Morris2023Vec2Text}, memory~\cite{Wang2025MEXTRA}, and text written by language models~\cite{Shao2024PrivacyLens} can all leak private data.

\begin{definition}[Federated Agent Optimization]
\label{def:fao}
Given $K$ clients with initial agents $\{A_k^0\}$, task distributions $\{\Dist_k\}$, private data $\{\Data_k\}$, and utilities $\{u_k\}$, FAO is the following multi-objective problem over the protocol $\Phi$ of the federation,
\begin{equation}
\label{eq:fao}
\max_{\Phi}\
\Bigg[\
\underbrace{\frac{1}{K}\sum_{k=1}^{K} U_k\big(A_k^{+};\Dist_k\big)}_{\text{local agent performance}},\quad
\underbrace{-\frac{1}{K}\sum_{k=1}^{K} \Leak_k\big(M_k\big)}_{\text{local data leakage risk}},\quad
\underbrace{-\,\Comm\big(\{\Delta_k\}_{k=1}^{K},\Phi\big)}_{\text{system communication cost}}
\ \Bigg],
\end{equation}
where the final agents $A_k^{+}$ and the uploads $\Delta_k$ result from $A_k^0$, $\Data_k$, and $\Phi$ through Eqs.~\eqref{eq:update}--\eqref{eq:adapt}, and $F = (F_1, F_2, F_3)$ denotes the three objectives.
\end{definition}

FAO shares its objectives with constrained multi-objective federated learning~\cite{Kang2024CMOFL} and differs from it in three properties of its decision space.
First, each block of an update is a parameter vector, a text, or a structure such as a graph or a rule set,
\begin{equation}
\label{eq:blocks}
\Delta_k^{c} \ \in\ \underbrace{\mathbb{R}^{d_c}}_{\text{parameters}} \ \cup\ \underbrace{\Sigma^{*}}_{\text{text}} \ \cup\ \underbrace{\mathcal{G}}_{\text{structures}}, \qquad c \in \{\pi, \Mem, \Tool, \Rew, \Onto\},
\end{equation}
where $\Sigma^{*}$ is the set of strings, so $\Agg$ must merge text and structures and not only average.
Second, utility is an expectation over trajectories with sparse feedback, so it has no gradient in the discrete components and only noisy gradient estimates in the continuous ones.
Third, a readable artifact can disclose a record verbatim, whereas a parameter update must first be inverted, so the leakage surface is wider.
FAO is thus not merely federated model training, but collaborative optimization over policies, memories, tools, rewards, skills, and knowledge structures under privacy constraints.

The objectives conflict, as quantified for federated learning of models~\cite{Zhang2022NFL,Zhang2023Tradeoff}: richer updates can raise $F_1$, often at the expense of $F_2$ and $F_3$.
A protocol $\Phi$ with $F(\Phi) = (F_1, F_2, F_3)$ is \emph{Pareto optimal} if no other feasible protocol dominates it,
\begin{equation}
\label{eq:pareto}
\nexists\, \Phi' : \quad F_i(\Phi') \ge F_i(\Phi) \ \ \forall i \in \{1,2,3\} \quad \text{and} \quad F(\Phi') \neq F(\Phi).
\end{equation}
FAO asks for methods that reach the Pareto-optimal set $\mathcal{Z}^{\star}$ and let a practitioner choose a point on the Pareto front $F(\mathcal{Z}^{\star})$ through one of two scalarized forms,
\begin{align}
&\max_{\Phi}\
\frac{1}{K}\sum_{k=1}^{K} U_k\big(A_k^{+};\Dist_k\big)
- \lambda_p\, \frac{1}{K}\sum_{k=1}^{K} \Leak_k\big(M_k\big)
- \lambda_c\, \Comm\big(\{\Delta_k\}_{k=1}^{K},\Phi\big), \label{eq:fao_weighted}\\
&\max_{\Phi}\
\frac{1}{K}\sum_{k=1}^{K} U_k\big(A_k^{+};\Dist_k\big)
\quad \text{s.t.} \quad \Leak_k\big(M_k\big) \le \bar\epsilon_k \ \ \forall k, \quad
\Comm\big(\{\Delta_k\}_{k=1}^{K},\Phi\big) \le B. \label{eq:fao_constrained}
\end{align}
The weights $\lambda_p, \lambda_c > 0$ express soft preferences, and the budgets $\bar\epsilon_k$ and $B$ express hard limits such as a regulatory requirement, which is how enterprises state their needs.
For the exact maximizers $\mathcal{Z}_{\mathrm{w}}$ of Eq.~\eqref{eq:fao_weighted} over all weights and $\mathcal{Z}_{\mathrm{c}}$ of Eq.~\eqref{eq:fao_constrained} over all budgets, Proposition~\ref{prop:inclusion} in Appendix~\ref{app:proofs} and~\cite{Miettinen1999NMOO} give the inclusions
\begin{equation}
\label{eq:inclusion}
\mathcal{Z}_{\mathrm{w}} \subseteq \mathcal{Z}^{\star} \subseteq \mathcal{Z}_{\mathrm{c}}, \qquad
\Phi^{\star} \in \arg\max \eqref{eq:fao_constrained}\big|_{\bar\epsilon_k = \Leak_k(\Phi^{\star}),\ B = \Comm(\Phi^{\star})} \quad \forall\, \Phi^{\star} \in \mathcal{Z}^{\star}.
\end{equation}
Positive weights yield only supported Pareto-optimal protocols, whose images lie on the convex hull of the front, and since the decision space contains text and rules and is not convex, only the constrained form is sure to reach every Pareto-optimal protocol.

A rational client joins the federation only if it gains from it, which adds a participation constraint on the \emph{collaboration gain} of each client,
\begin{equation}
\label{eq:collab_gain}
\mathrm{CG}_k = U_k\big(A_k^{+};\Dist_k\big) - U_k\big(A_k^{\mathrm{loc}};\Dist_k\big) \ \ge\ 0 \qquad \forall k.
\end{equation}
The gain follows from $F_1$ and a baseline, so it is not a fourth objective, and Section~\ref{sec:mutuality} gives a gate that enforces it with high probability, while Appendix~\ref{app:extensions} extends the formulation to inference cost and to task distributions that change over time.

\subsection{Relation to Existing Paradigms}
\label{sec:relation}

Several existing problems are special cases of Eq.~\eqref{eq:fao},
\begin{equation}
\label{eq:special}
\begin{aligned}
&\text{Federated Learning:} && \Delta_k = \big(\theta_k - \theta,\, \emptyset, \emptyset, \emptyset, \emptyset\big), \quad
\theta^{+} = \theta + \textstyle\sum_{k} \frac{n_k}{n}\, \Delta_k^{\pi}; \\
&\text{Experience Pooling:} && \Delta_k = \Data_k, \quad \lambda_p = 0; \\
&\text{Agent Optimization:} && K = 1, \quad \max_{A}\, U(A;\Dist); \\
&\text{Component:} && \Delta_k = \big(\emptyset, \dots, \Delta_k^{c}, \dots, \emptyset\big), \quad \lambda_p = \lambda_c = 0.
\end{aligned}
\end{equation}
They cover federated averaging with $n = \sum_k n_k$~\cite{mcmahan2016communication}, multi-agent systems that pool raw messages or experience without a privacy objective~\cite{Wu2023AutoGen,Li2023CAMEL,Zhang2025GMemory,Tang2025AgentKB}, single-agent optimization~\cite{Khattab2024DSPy,Yuksekgonul2025TextGrad}, and recent methods that federate one component~\cite{Chen2025FedSE,Yao2026FoT,Wu2024FICAL,Yang2026FederatedSkill} with privacy and communication fixed by design.

\section{Taxonomy of Federated Agent Optimization}
\label{sec:taxonomy}

We classify FAO methods by the block of the typed update in Eq.~\eqref{eq:update} that they share (Table~\ref{tab:taxonomy}).
We searched arXiv, the ACL Anthology, OpenReview, and the proceedings of the main machine learning venues up to 29 September 2026 for \emph{federated} combined with \emph{agent}, \emph{memory}, \emph{tool}, \emph{skill}, \emph{reward model}, \emph{prompt}, or \emph{knowledge graph}.
\emph{Agent-level} methods improve a component of a language agent across clients whose data stay local, and \emph{precursors} federate the same kind of component outside an agent, such as the weights of a language model.
The categories overlap (Appendix~\ref{app:taxonomy}).

\begin{table}[!htb]
\centering
\footnotesize
\renewcommand{\arraystretch}{1.25}
\setlength{\tabcolsep}{4pt}
\begin{tabularx}{\textwidth}{@{}>{\raggedright\arraybackslash}p{2.0cm} >{\raggedright\arraybackslash}X >{\raggedright\arraybackslash}X >{\raggedright\arraybackslash}p{2.1cm} >{\raggedright\arraybackslash}p{2.0cm}@{}}
\toprule
\textbf{Category} & \textbf{Shared update} & \textbf{Main leakage channel} & \textbf{Agent-level methods} & \textbf{Precursors} \\
\midrule
Policy ($\Delta^{\pi}$) &
adapter weights or gradients; soft or textual prompts &
inversion of gradients; private examples kept in prompts &
\cite{Wang2025FedMABench,Wang2025MobileA3gent,Wang2026FedGUI,Zhao2026FedWave}, \cite{Chen2025FedSE,Chen2026FedAgent}$^{\ast}$ &
\cite{Zhang2024FedIT,Ye2024OpenFedLLM,Sun2024FedBPT,Chen2025FedTextGrad,Fan2025FedRLHF} \\
Memory ($\Delta^{\Mem}$) &
insights, reasoning abstractions, workflows &
private content quoted in text &
\cite{Yao2026FoT,Li2026FedAgentKE}$^{\ast}$ &
\cite{Wang2024FeB4RAG}, \cite{Addison2024CFedRAG,Mao2025FedE4RAG}$^{\ast}$ \\
Tool use ($\Delta^{\Tool}$) &
tool knowledge as text or typed fields; usage statistics &
schemas of internal systems; arguments of calls &
\cite{Wu2024FICAL,Chakraborty2026TypedArtifacts}$^{\ast}$ &
--- \\
Reward ($\Delta^{\Rew}$) &
parameters of evaluators or preference models; scores &
labels and outcomes of individual cases &
--- &
\cite{Wu2025FedBiscuit,Srewa2025PluralLLM}, \cite{Kim2026FedGD}$^{\ast}$ \\
Structured knowledge ($\Delta^{\Onto}$) &
embeddings of entities and relations; rules; skill patches &
entity sets and triples; business processes &
\cite{Hou2026FedWorld,Yang2026FederatedSkill}$^{\ast}$ &
\cite{Chen2021FedE,Peng2021FKGE,Zhang2022FedR} \\
\bottomrule
\end{tabularx}
\caption{Taxonomy of FAO. Agent-level methods federate a component of a language agent, precursors the same kind of component outside an agent. $^{\ast}$Preprint.}
\label{tab:taxonomy}
\end{table}

\paragraph{Policy ($\Delta^{\pi}$).}
With open weights, clients fine-tune adapters that the coordinator averages with weights $w_k$, and with a closed backbone they share textual prompts $p_k$ that a language model merges, in both cases into one view for all clients,
\begin{equation}
\label{eq:tax_policy}
\begin{alignedat}{3}
&\text{weights:} \quad & \Delta_k^{\pi} &= \theta_k - \theta, \qquad & G^{\pi} &= \textstyle\sum_{k \in \mathcal{S}^t} w_k\, \Delta_k^{\pi}, \qquad \theta_k^{+} = \theta + G^{\pi}; \\
&\text{prompts:} \quad & \Delta_k^{\pi} &= p_k, \qquad & G^{\pi} &= \mathrm{LLM}\big(p_1, \dots, p_K\big), \qquad \theta_k^{+} = G^{\pi}.
\end{alignedat}
\end{equation}
The first form is federated fine-tuning~\cite{Zhang2024FedIT,Ye2024OpenFedLLM,Wu2026FedLLMSurvey}, with variants for heterogeneous adapter ranks~\cite{Wang2024FLoRA,Cho2024HetLoRA,Bai2024FlexLoRA}, differential privacy~\cite{sun2024improving}, and personalization~\cite{Yang2024FedDPA,Lu2024FDLoRA,Guo2025FedSALoRA}.
It has been applied to GUI agents~\cite{Wang2025FedMABench,Wang2025MobileA3gent,Wang2026FedGUI}, to agents tuned on their successful trajectories~\cite{Chen2025FedSE} or by reinforcement learning~\cite{Chen2026FedAgent}, and to workflows that span organizations~\cite{Zhao2026FedWave}.
The second form covers soft and black-box prompts~\cite{Sun2024FedBPT,Lin2023FedBBPT} and textual prompts~\cite{Chen2025FedTextGrad}, and it links FAO to the optimization of prompts and workflows by language models~\cite{yang2023large,Khattab2024DSPy,Yuksekgonul2025TextGrad,Agrawal2026GEPA}.
Weights can be inverted to recover training text~\cite{Zhu2019DLG,Petrov2024DAGER,Sami2025PEFTLeak}, against which secure aggregation and differential privacy are the standard defenses~\cite{Bonawitz2017SecureAgg,abadi2016deep}, whereas prompts are small but readable by every receiver and may quote the private examples from which they were derived.

\paragraph{Memory ($\Delta^{\Mem}$).}
Raw trajectories stay local.
Clients share items $z$ that an abstraction $\alpha$ distills from them, such as insights, workflows, and reasoning strategies~\cite{Shinn2023Reflexion,Zhao2024ExpeL,Wang2025AWM,Ouyang2026ReasoningBank}, the coordinator condenses the items of all clients, and each client retrieves the items $G^{\Mem}(x)$ that match its current task $x$ at inference time,
\begin{equation}
\label{eq:tax_memory}
\Delta_k^{\Mem} = \alpha(\Data_k) = \{z_{k,i}\}_{i}, \qquad
G^{\Mem} = \mathrm{Distill}\big(\textstyle\bigcup_{k} \Delta_k^{\Mem}\big), \qquad
\Mem_k^{+} = \Mem_k \cup G^{\Mem}(x).
\end{equation}
Systems with a trusted store share such items among agents~\cite{Gao2024MemorySharing,Zhang2025GMemory,Tang2025AgentKB,Rezazadeh2025CollaborativeMemory}.
FoT and FedAgentKE avoid centralizing raw trajectories~\cite{Yao2026FoT,Li2026FedAgentKE}, and federated retrieval-augmented generation avoids centralizing documents~\cite{Chakraborty2025FedRAGMapping,Addison2024CFedRAG,Zhao2024FRAG,Mao2025FedE4RAG}.
Items are text, so $\Agg$ is performed by a language model, and the coordinator becomes an agent itself.
Text can quote private content and stored memory can be extracted~\cite{Zeng2024RAGPrivacy,Wang2025MEXTRA,Morris2023Vec2Text}, so FoT audits its uploads by a reconstruction attack, and differentially private generation of text~\cite{Tang2024DPFewShot,Wu2024DPICL,Xie2024AugPE} would add a formal guarantee.

\paragraph{Tool use ($\Delta^{\Tool}$).}
Clients in the horizontal setting use tools of the same kind, and standard interfaces make their descriptions comparable~\cite{Anthropic2024MCP}.
For a tool $v$ and an intent $c$, a client shares its documentation $\mathrm{doc}_k(v)$ and its numbers of calls $N_k$ and successes $N_k^{+}$, from which the coordinator derives a merged documentation and a success rate $\hat q$,
\begin{equation}
\label{eq:tax_tool}
\Delta_k^{\Tool} = \big(\mathrm{doc}_k, N_k, N_k^{+}\big), \qquad
\mathrm{doc}(v) = \mathrm{Merge}\big(\{\mathrm{doc}_k(v)\}_{k}\big), \qquad \hat q(v,c) = \frac{\sum_{k} N_k^{+}(v,c)}{\sum_{k} N_k(v,c)}.
\end{equation}
Only the sums of the counts are needed, so secure aggregation protects them~\cite{Bonawitz2017SecureAgg}.
Descriptions decide how well a tool is used, and they can be refined from experience~\cite{Schick2023Toolformer,Qin2024ToolLLM,Yuan2025EasyTool,Qu2025DRAFT}.
FICAL concatenates the textual tool knowledge of the clients~\cite{Wu2024FICAL}, and typed artifacts merge it field by field across frozen models of different vendors~\cite{Chakraborty2026TypedArtifacts}.
Schemas disclose internal systems, and a shared description is an instruction to every agent that reads it, so $\Agg$ must also resist malicious contributions~\cite{Hou2026MCP}.

\paragraph{Reward ($\Delta^{\Rew}$).}
Clients observe real outcomes but few of them, and the coordinator can run a strong judge, but judges are biased and unreliable on trajectories~\cite{Zheng2023Judging,Lu2025AgentRewardBench}.
A shared evaluator $g_{\phi}$ and a local head $h_{\psi}$, trained jointly with a loss $\ell$, combine both sides, and only the shared part $g_{\phi}$ leaves the client,
\begin{equation}
\label{eq:tax_reward}
(\phi_k, \psi_k) = \arg\min_{\phi, \psi} \sum_{i=1}^{n_k} \ell\big(h_{\psi}(g_{\phi}(\tau_{k,i})),\, u_k(\tau_{k,i})\big), \quad
\Delta_k^{\Rew} = \phi_k - \phi, \quad
\Rew_k^{+} = h_{\psi_k} \circ g_{\phi + G^{\Rew}}.
\end{equation}
Existing methods federate preference or reward models for aligning language models~\cite{Wu2025FedBiscuit,Srewa2025PluralLLM,Kim2026FedGD} and handle heterogeneous $u_k$ by grouping similar clients~\cite{Wu2025FedBiscuit} or by an initialization that adapts quickly~\cite{Kim2026FedGD}, whereas FedRLHF keeps rewards local and federates policies~\cite{Fan2025FedRLHF} and FedEval-LLM uses client models as referees~\cite{He2024FedEvalLLM}.
To our knowledge, none federates an evaluator of agent trajectories, for which process reward models~\cite{Lightman2024Verify,Chae2025WebShepherd} and reward functions written by language models~\cite{Kwon2023RewardDesign,Ma2024Eureka,Xie2024Text2Reward} are building blocks.
A cheaper form shares only the scores of candidate rewards $r^{(j)}$ from the coordinator,
\begin{equation}
\label{eq:tax_reward_scores}
\Delta_k^{\Rew} = \big(\kappa_k^{(j)}\big)_{j=1}^{J}, \qquad
\kappa_k^{(j)} = \operatorname*{corr}_{i \le n_k}\big(r^{(j)}(\tau_{k,i}),\, u_k(\tau_{k,i})\big), \qquad
G^{\Rew} = \arg\max_{r^{(j)}}\, \textstyle\sum_{k} \kappa_k^{(j)}.
\end{equation}
Only $J$ scores leave each client, so leakage and payload stay small.

\paragraph{Structured knowledge and skills ($\Delta^{\Onto}$).}
For an entity $e$ with local embedding $\mathbf{e}_k$, known to the clients $\mathcal{K}_e$, and for a skill library $\mathcal{L}_k$ with local patch $\delta_k$,
\begin{equation}
\label{eq:tax_onto}
G^{\Onto}(e) = \frac{1}{|\mathcal{K}_e|} \sum_{k \in \mathcal{K}_e} \mathbf{e}_k, \qquad
\mathcal{L}_k^{+} = \mathrm{Apply}\Big(\mathcal{L}_k,\ G_k^{\Onto}\Big), \qquad G_k^{\Onto} = \mathrm{Merge}_k\big(\{\delta_j\}_{j=1}^{K}\big).
\end{equation}
FedE averages entity embeddings~\cite{Chen2021FedE}.
They allow triples to be reconstructed, so FedR shares only relation embeddings under secure aggregation~\cite{Zhang2022FedR}, and FKGE and FedLU add differential privacy and personalization~\cite{Peng2021FKGE,Zhu2023FedLU}.
All assume aligned entities, which ontology matching provides, now also with language models~\cite{Euzenat2013OntologyMatching,Giglou2025LLMs4OM,Qiang2024AgentOM}.
FedWorld federates rules on environment dynamics and labels each rule as shared, specific to a cluster, private, or unresolved~\cite{Hou2026FedWorld}.
FederatedSkill exchanges patches to skill libraries, which an agent at the server merges into a personalized view $G_k^{\Onto}$ for each client~\cite{Yang2026FederatedSkill}.
Embeddings reveal which entities a client knows, and skills how it conducts its business.

\paragraph{Comparison.}
The reported payloads differ by roughly three orders of magnitude,
\begin{equation}
\label{eq:payloads}
|\Delta^{\pi}| \approx 76~\text{MB}, \qquad
|\Delta^{\Tool}| \approx 0.08 \text{ to } 20~\text{MB}, \qquad
|\Delta^{\Onto}| \approx 0.1 \text{ to } 0.3~\text{MB},
\end{equation}
for adapters of rank 8 per round~\cite{Chen2025FedSE}, tool knowledge per run or per round~\cite{Wu2024FICAL,Chakraborty2026TypedArtifacts}, and skill patches per run~\cite{Yang2026FederatedSkill}, and parameter updates can also be compressed to kilobytes~\cite{Qin2024FedKSeed}.
The methods that exchange text or rules~\cite{Wu2024FICAL,Yao2026FoT,Li2026FedAgentKE,Yang2026FederatedSkill,Hou2026FedWorld} keep raw trajectories local but apply no formal privacy mechanism, and two of them audit their uploads empirically~\cite{Yao2026FoT,Yang2026FederatedSkill}.
To our knowledge, no method trades the three objectives off across components, which is what Eq.~\eqref{eq:fao} asks for.

\section{How Agents Mutually Improve Each Other}
\label{sec:mutual}

Section~\ref{sec:taxonomy} classified FAO by the component that is optimized.
This section follows the knowledge, from the private experience of one agent to the gain of another, with a focus on semantic updates such as text, rules, and skills.

\subsection{The Improvement Loop}
\label{sec:loop}

Every update maps local evidence, such as trajectories, records, or tool interactions, to an artifact and protects it, $\Update = \rho \circ \alpha$, where $\alpha$ is local training for parametric updates and an abstraction $\alpha_{\ell}$ at a level $\ell$ for semantic ones, which carry knowledge from a sender $j$ to a receiver $k$ in four stages under a protection rule,
\begin{align}
&\Data_j \ \xrightarrow{\ \alpha_{\ell}\ }\ z \ \xrightarrow{\ \rho\ }\ \tilde z \ \xrightarrow{\ \Agg\ }\ G_k \ \xrightarrow{\ \Adapt\ }\ A_k^{+}, \label{eq:loop}\\
&\rho(z) = \mathrm{Sanitize}(z) \ \ \text{if } \big|\{e \in \Data_j : e \models z\}\big| \ge n_{\min}, \qquad \rho(z) = \emptyset \ \ \text{otherwise}. \label{eq:protect}
\end{align}
Here the predicate $e \models z$, evaluated by the sender, states that the evidence item $e$ exhibits the pattern $z$, $n_{\min}$ is a support threshold, and $\mathrm{Sanitize}$ removes identifiers or adds calibrated noise.
Abstraction is a central design dimension of FAO, because knowledge travels as text, rules, and skills, and not only as parameters or as the logits of federated distillation~\cite{Li2019FedMD,Fan2025FedMKT}.
The levels $\boldsymbol{\ell} = (\ell_1, \dots, \ell_K)$ trade the detail of the artifacts against their size, leakage, and reach, and as part of $\Phi$ they are chosen jointly by Eq.~\eqref{eq:fao_weighted},
\begin{equation}
\label{eq:level_choice}
\boldsymbol{\ell}^{\star} \in \arg\max_{\boldsymbol{\ell}}\ F_1(\boldsymbol{\ell}) - \frac{\lambda_p}{K} \sum_{k=1}^{K} \Leak_k(\ell_k) - \lambda_c\, \Comm(\boldsymbol{\ell}).
\end{equation}
A selfish client would instead maximize its own utility minus its own costs and, since its upload mainly helps others, upload as little as possible, the free-riding that incentives counter (Section~\ref{sec:mutuality}).
Higher abstraction should shrink $z$, lower leakage, and widen reach, though not always: an abstract business rule can itself be a trade secret.

\subsection{Five Types of Shared Knowledge}
\label{sec:levels}

Table~\ref{tab:levels} lists five types of shared knowledge with a running example, in which banks verify customer identity with agents and cannot share records.
The types differ in function rather than in degree of abstraction, and they interact: aligned ontologies make experience, skills, and rules easier to ground (Eq.~\ref{eq:conditions}), and the reward type supplies the evaluator that selects what the sender distills.
An artifact of any type may enter the memory, the prompt, or the evaluator of the receiving client (Appendix~\ref{app:levels}).

\begin{table}[!htb]
\centering
\footnotesize
\renewcommand{\arraystretch}{1.25}
\setlength{\tabcolsep}{4pt}
\begin{tabularx}{\textwidth}{@{}>{\raggedright\arraybackslash}p{1.75cm} >{\raggedright\arraybackslash}p{3.5cm} >{\raggedright\arraybackslash}X >{\raggedright\arraybackslash}p{1.8cm}@{}}
\toprule
\textbf{Type} & \textbf{Shared artifact $z$} & \textbf{Example in identity verification} & \textbf{Federated methods} \\
\midrule
Experience & pattern distilled from successes and failures & normalize the transliteration of names before the registry lookup & \cite{Yao2026FoT,Li2026FedAgentKE} \\
Skill & parameterized procedure & recognize the license, query the registry, compare the legal representative, escalate on mismatch & \cite{Yang2026FederatedSkill} \\
Rule & effect of an action in the environment & three failed one-time codes lock the account for 30 minutes & \cite{Hou2026FedWorld} \\
Reward & evaluator or rubric & a dialogue is non-compliant if the agent reveals which field failed the check & \cite{Wu2025FedBiscuit,Kim2026FedGD}$^{\dagger}$ \\
Ontology & alignment of concepts & \emph{beneficial owner} $\equiv$ \emph{ultimate controlling person} & \cite{Chen2021FedE,Peng2021FKGE}$^{\dagger}$ \\
\bottomrule
\end{tabularx}
\caption{Five types of shared knowledge, which differ in function, with a running example. $^{\dagger}$Precursor that federates the component outside an agent (Table~\ref{tab:taxonomy}).}
\label{tab:levels}
\end{table}

\subsection{When Is the Improvement Mutual?}
\label{sec:mutuality}

Let $A_k^{+\setminus j}$ be the agent that client $k$ obtains when the updates of client $j$ are withheld.
The contribution of $j$ to $k$ and the condition for mutual improvement are
\begin{equation}
\label{eq:contribution}
I_{j \to k} = U_k\big(A_k^{+};\Dist_k\big) - U_k\big(A_k^{+\setminus j};\Dist_k\big),
\qquad
I_{j \to k} > 0 \ \ \text{and} \ \ I_{k \to j} > 0.
\end{equation}
Averaged over subsets of clients, such marginal contributions give the Shapley value, which is used to evaluate contributions in federated learning~\cite{Ghorbani2019DataShapley,Wang2020FederatedShapley}.
Three conditions favor the transfer of an item $z$ from $j$ to $k$, with a divergence $\mathrm{div}$ of task distributions, a novelty threshold $\nu$ for known items $y \in \Mem_k \cup \Onto_k$, an alignment $\mu_k$ of shared to local concepts, and the concepts $\mathrm{con}(z)$ and tools $\mathrm{tool}(z)$ that $z$ uses,
\begin{equation}
\label{eq:conditions}
\underbrace{\mathrm{div}(\Dist_j, \Dist_k) \le \mathrm{div}_{\max}}_{\text{relatedness}}, \quad
\underbrace{\max\nolimits_{y} \mathrm{sim}(z, y) < \nu}_{\text{novelty}}, \quad
\underbrace{\mathrm{con}(z) \subseteq \operatorname{dom}(\mu_k), \ \ \mathrm{tool}(z) \subseteq \Tool_k}_{\text{groundability}}.
\end{equation}
They are heuristics, not necessary conditions: distant tasks can share a strategy, a known item can add robustness, and an item can bring a new concept or tool.
Heterogeneity is thus both an obstacle to transfer and a source of its value.
Clustered aggregation addresses relatedness~\cite{Ghosh2020IFCA,Sattler2021ClusteredFL}, and agents trained by federated reinforcement learning tolerate differences in tasks but not in environment dynamics~\cite{Chen2026FedAgent}.

Two mechanisms address these risks.
A gate against negative transfer~\cite{Wang2019NegativeTransfer} adds the items $z_1, \dots, z_m$ one at a time to $A_k^{(0)} = A_k^{\mathrm{loc}}$, where $A \oplus z$ integrates $z$ into $A$, and accepts an item only if its estimated gain exceeds twice an error bound $e_k$,
\begin{equation}
\label{eq:gate}
A_k^{(i)} =
\begin{cases}
A_k^{(i-1)} \oplus z_i & \text{if } \hat U_k\big(A_k^{(i-1)} \oplus z_i\big) - \hat U_k\big(A_k^{(i-1)}\big) \ge 2 e_k, \\
A_k^{(i-1)} & \text{otherwise},
\end{cases}
\qquad i = 1, \dots, m.
\end{equation}
With $A_k^{+} = A_k^{(m)}$ and $\hat U_k$ the mean of $u_k$ on $n_k^{\mathrm{val}}$ held-out tasks from $\Dist_k$, Hoeffding's inequality and a union bound over the $2^m$ reachable agents give
\begin{equation}
\label{eq:gate_bound}
e_k = \sqrt{\frac{(m+1)\ln 2 + \ln(1/\eta)}{2\, n_k^{\mathrm{val}}}}
\quad \Longrightarrow \quad
\Pr\big[\mathrm{CG}_k \ge 0\big] \ \ge\ 1 - \eta,
\end{equation}
so by Proposition~\ref{prop:gate} the gate guarantees individual rationality, that is, the participation constraint~\eqref{eq:collab_gain}, at a margin that grows with $m$ and calls for filtering or batching the items.
Pairwise mutual improvement in the sense of Eq.~\eqref{eq:contribution} is stronger and not implied by Proposition~\ref{prop:gate}, because the items of different clients interact.
Incentives that tie the benefit of a client to its contribution $I_{j \to k}$ encourage it, since a client that only adopts knowledge gains without bearing the privacy cost~\cite{Blum2021OneForOne,Donahue2021ModelSharingGames,Zhan2022IncentiveSurvey,Kou2026WinFLoRA}.
Related safeguards: FedWorld checks the scope of each rule~\cite{Hou2026FedWorld}, Agent KB filters retrieved knowledge by a disagreement gate~\cite{Tang2025AgentKB}, and FederatedSkill builds a library per client~\cite{Yang2026FederatedSkill}.
Each local agent thus remains private and personalized, while the federation lets it benefit from the experience that other agents accumulate in environments it never sees.

\section{Key Challenges}
\label{sec:fao-challenges}

The mechanisms above illustrate how local experience can be transformed into transferable capabilities and ultimately improve other agents. However, they also expose three fundamental challenges that distinguish FAO from conventional federated learning. First, performance, privacy, and communication are inherently coupled rather than independently optimizable objectives. Second, heterogeneous participants may contribute and benefit unevenly, making fairness central to sustained collaboration. Third, the gains of federation are difficult to verify because agent outcomes are long-horizon, environment-dependent, and not yet supported by dedicated evaluation protocols. These challenges extend the statistical and system heterogeneity already studied in federated learning~\cite{karimireddy2020scaffold,lai2021oort} from model optimization to the broader problem of collaborative agent improvement.

\subsection{Performance, Privacy and Communication Trade-offs}
\label{subsec:fao-tradeoff-challenge}

The three objectives of FAO---task performance, privacy protection, and communication efficiency---form a tightly coupled trade-off. Improving performance generally requires sharing information that preserves sufficient detail about local experience, yet richer updates can increase both privacy exposure and communication overhead. Data locality alone does not eliminate this risk: gradients and model updates can reveal information about local training examples~\cite{geiping2020inverting,nasr2019privacy}, while memories, skill descriptions, tool-use knowledge, and structured artifacts may disclose sensitive content more directly. Stronger protection, such as differential privacy, reduces disclosure but can also degrade the information available for optimization~\cite{abadi2016deep}. Conversely, aggressive compression can substantially reduce communication traffic~\cite{lin2018dgc}, but may remove information that is important for capability transfer. Secure aggregation protects individual updates from direct inspection~\cite{Bonawitz2017SecureAgg}, yet this can further constrain validation of what is being shared. Client heterogeneity additionally changes the utility of the same update across participants and can induce objective inconsistency in global optimization~\cite{wang2020fednova}. The central challenge is therefore not to minimize any single cost in isolation, but to identify operating points at which useful capability transfer remains possible under explicit privacy and communication budgets. This makes the Pareto trade-off defined in Section~3 a central systems question for practical FAO.

\subsection{How to Ensure Fairness in Federated Agent Systems?}
\label{subsec:fao-fairness-challenge}

A second challenge is ensuring that the benefits and burdens of federation are distributed fairly across heterogeneous participants. Improvements in federation-wide utility may coexist with substantially different outcomes for individual clients~\cite{li2020fair,mohri2019agnostic}. Some agents may contribute highly valuable experience but receive limited benefit in return, whereas others may benefit primarily from knowledge produced elsewhere. At the same time, globally shared capabilities can induce negative transfer when local objectives, task distributions, tools, or operating constraints differ. Personalized federated learning provides mechanisms for retaining local specialization under shared optimization~\cite{li2021ditto,dinh2020pfedme}, but FAO requires a broader notion of fairness that also accounts for non-parametric artifacts such as memories, skills, rewards, and structured knowledge. Contribution estimation is itself difficult because task outcomes result from long sequences of decisions and feedback may be sparse or delayed~\cite{arjonamedina2019rudder}; attributing recipient-side gains to particular shared contributions introduces an additional layer of credit assignment~\cite{harutyunyan2019hindsight}. The collaboration gain and pairwise contribution defined in Section~5 therefore provide important starting points, but a complete fairness framework must jointly consider what each participant contributes, what benefit it receives, and what privacy, computation, and communication costs it bears. Trustworthy participation ultimately requires that clients retain the ability to personalize, selectively adopt, or reject shared capabilities whose local value does not justify their associated cost or risk.

\subsection{How to Evaluate Federated Agent Systems?}
\label{subsec:fao-verification-challenge}

A third challenge is establishing credible evidence that federated collaboration actually improves agent capability. Agent performance is inherently difficult to verify because success depends on interactions among reasoning, memory retrieval, tool execution, and environmental conditions. Tool-use research shows that apparently similar tools and interfaces can require substantially different execution behavior, while reasoning and environment-facing actions are tightly coupled in interactive agents~\cite{yao2023react}. Consequently, a shared artifact that appears semantically valid may still fail after transfer to a recipient environment. Verification becomes even harder over time: continual optimization can introduce stale capabilities, distribution drift, and interference with previously acquired knowledge~\cite{yoon2021fedweit,zhang2025generalization}. Erroneous or adversarial contributions further complicate the distinction between genuine capability improvement and harmful shared updates~\cite{yin2018byzantine}. Existing benchmarks evaluate general agent competence, web and desktop interaction, or tool--user workflows~\cite{liu2024agentbench,zhou2024webarena,xie2024osworld,yao2025taubench}, while federated benchmarks such as FedScale primarily characterize model and system heterogeneity~\cite{lai2022fedscale}. FAO requires evaluation protocols that combine these perspectives: controlled client heterogeneity, explicit information-sharing boundaries, repeated collaborative optimization, and credible local baselines. Beyond federation-wide task utility, evaluation should report privacy leakage, communication and system costs, client-level collaboration gains, contribution fairness, negative-transfer rates, and robustness under task or participant drift. Without such benchmarks and metrics, it remains difficult to determine when federation truly creates transferable intelligence rather than merely improving an aggregate score.

\section{Future Works}
\label{sec:fao-directions}

A central direction for future FAO research is to build \textbf{\emph{trustworthy federated agent systems}}, where trustworthiness is treated as an end-to-end property of collaborative optimization rather than a single notion of model accuracy or privacy. We highlight several important research questions.

\ding{182} \emph{How can transferred capabilities remain reliable and executable across heterogeneous agent environments?}
A capability that is effective for one agent may rely on specific tools, interfaces, permissions, task distributions, or workflow contexts that do not hold for another agent. Prior work on grounded planning, tool use, and multi-agent workflows highlights the importance of environment-aware execution and capability composition~\cite{huang2022zeroshot,hao2023toolkengpt,li2023apibank,hong2024metagpt}. Future FAO systems should therefore explicitly model the applicability conditions, dependencies, and execution constraints of transferred capabilities, rather than assuming that capabilities remain valid after cross-agent transfer.

\ding{183} \emph{How can federated agent systems protect privacy while remaining secure against unreliable or malicious contributions?}
Keeping raw data local does not eliminate leakage from model parameters, memories, textual abstractions, tool knowledge, or structured representations. Differential privacy and secure aggregation provide useful foundations for limiting such leakage~\cite{abadi2016deep,Bonawitz2017SecureAgg}, while robust aggregation is necessary to tolerate faulty or adversarial participants~\cite{blanchard2017byzantine,cao2021fltrust}. Future research should jointly consider confidentiality, update integrity, and the tension between protecting shared information and validating its quality.

\ding{184} \emph{How can fair contribution and benefit allocation be achieved among heterogeneous agents?}
Different agents may contribute unequal amounts of useful experience, receive different levels of benefit, or even suffer negative transfer. Personalized and fairness-aware federated learning provide useful foundations for preserving local specialization and participant-level utility~\cite{fallah2020personalized,jeong2022factorizedfl,chaudhury2022corestable,lin2020ensemble}. Future FAO research should further quantify contribution, collaboration gain, and participation cost, and ensure that agents receive benefits commensurate with the information, privacy, and resources they contribute.

\ding{185} \emph{How should federated agent systems be evaluated?}
Reliable evaluation should verify whether collaboration genuinely improves local agents compared with credible local baselines, rather than relying only on shared evaluators or aggregate performance. This is particularly challenging when feedback is sparse, trajectories are long, and environments are stochastic~\cite{Christiano2017DeepRLHF,ouyang2022instructgpt,rafailov2023dpo,madaan2023selfrefine}. Existing agent and federated benchmarks provide useful starting points~\cite{liu2024agentbench,zhou2024webarena,xie2024osworld,yao2025taubench,lai2022fedscale}, but FAO additionally requires explicit information-sharing boundaries, heterogeneous participants, repeated collaboration, and longitudinal evaluation of recipient-level gains, negative transfer, and robustness under drift.

\ding{186} \emph{How should the full lifecycle cost of federated agent collaboration be measured and optimized?}
Communication volume alone is insufficient to characterize the cost of FAO. Collaboration may also incur substantial costs for local adaptation, privacy protection, validation, inference, tool execution, storage, synchronization, and continual maintenance, especially under heterogeneous computation and model capacities~\cite{alam2022fedrolex,mei2022flanc,diao2021heterofl}. Future work should therefore develop lifecycle-level cost models that jointly account for the resources required to produce, protect, aggregate, validate, deploy, and maintain shared capabilities.

Taken together, these questions suggest that trustworthy FAO should jointly consider execution reliability, privacy and security, fairness, credible evaluation, and lifecycle cost, providing a principled basis for collaborative learning among autonomous agents across organizations.
\section{Conclusion}
\label{sec:fao-conclusion}

This paper formulates Federated Agent Optimization (FAO) as a general framework for collaborative improvement across policies, memories, tool use, rewards, and structured knowledge under distributed data and local constraints. By extending federated optimization beyond model parameters, we establish a multi-objective problem formulation, a taxonomy of federated agent components, and an improvement framework that connects local experience to transferable capabilities and recipient-side gains. Our analysis highlights three central challenges---balancing performance, privacy, and communication cost, ensuring fair contribution and benefit allocation, and reliably verifying the gains of federation---which together motivate the broader goal of building trustworthy FAO systems with reliable execution, privacy and security protection, credible evaluation, and principled cost accounting. We envision FAO as a foundation for scalable, trustworthy, and continuously evolving enterprise AI systems that benefit from collective experience while preserving local specialization and organizational control.

\bibliographystyle{plain}
\bibliography{ref}

\appendix

\section{Notation}
\label{app:notation}

Table~\ref{tab:notation} summarizes the notation of Sections~\ref{sec:problem}--\ref{sec:mutual}.

\begin{table}[!htb]
\centering
\small
\renewcommand{\arraystretch}{1.15}
\begin{tabularx}{\textwidth}{@{}l X@{}}
\toprule
\textbf{Symbol} & \textbf{Meaning} \\
\midrule
$K$, $k$, $\mathcal{X}$ & number and index of clients; task space shared by all clients \\
$\Dist_k$, $u_k(\tau)$ & task distribution of client $k$; utility of a trajectory \\
$\Data_k$, $n_k$ & private dataset of client $k$ (tasks, logged trajectories, raw records); its number of tasks \\
$A_k^0$, $A_k^t$ & initial agent of client $k$; its agent after round $t$ \\
$A_k = (\pi_k, \Mem_k, \Tool_k, \Rew_k, \Onto_k)$ & policy, memory, tools, reward, structured knowledge (ontologies, rules, skills) \\
$\theta_k$ & parameters of the policy $\pi_k$, continuous or discrete \\
$\tau$, $H$ & trajectory and its number of steps \\
$U_k(A;\Dist_k)$ & expected utility of agent $A$ on the tasks of client $k$ \\
$\Phi$ & protocol: participants, rules $\Update$, $\Agg$, $\Adapt$, protection, schedule \\
$\mathcal{S}^t$, $T$ & clients that participate in round $t$; number of rounds \\
$\Delta_k = (\Delta_k^{\pi}, \Delta_k^{\Mem}, \Delta_k^{\Tool}, \Delta_k^{\Rew}, \Delta_k^{\Onto})$ & typed update of client $k$, each block in $\mathbb{R}^{d_c} \cup \Sigma^{*} \cup \mathcal{G}$ \\
$G = \{G_k\}_{k=1}^{K}$ & aggregated state with the view $G_k$ of client $k$ \\
$A_k^{+}$, $A_k^{\mathrm{loc}}$, $A_k^{+\setminus j}$ & agent of client $k$ after the federation; without the federation; without the updates of client $j$ \\
$F = (F_1, F_2, F_3)$ & objectives: performance, privacy, communication \\
$M_k$ & mechanism that maps $\Data_k$ to the transcript of uploads of client $k$ \\
$(\epsilon, \delta)$; $\mathfrak{A}$, $\mathrm{sim}$, $\gamma$ & differential privacy parameters; threat model, similarity, and threshold of an attack \\
$\Leak_k(M_k)$, $\Comm(\{\Delta_k\},\Phi)$ & privacy leakage of client $k$, also written $\Leak_k(\Phi)$; communication cost of the protocol \\
$\simeq$, $\bot$, $\mathrm{hit}$ & adjacency of datasets; no upload; success of an attack \\
$\lambda_p$, $\lambda_c$; $\bar\epsilon_k$, $B$ & weights of the privacy and communication objectives; their budgets \\
$\mathcal{Z}^{\star}$, $\mathcal{Z}_{\mathrm{w}}$, $\mathcal{Z}_{\mathrm{c}}$ & Pareto-optimal protocols; maximizers of the weighted form; of the constrained form \\
$\mathrm{CG}_k$, $I_{j \to k}$ & collaboration gain of client $k$; contribution of client $j$ to client $k$ \\
$\alpha$, $\alpha_{\ell}$, $\rho$, $z$ & map from local evidence to an artifact; abstraction at level $\ell$; protection; artifact \\
$e \models z$, $n_{\min}$ & evidence item $e$ exhibits pattern $z$; support threshold \\
$\boldsymbol{\ell} = (\ell_1, \dots, \ell_K)$ & levels of abstraction of all clients, part of $\Phi$ \\
$g_{\phi}$, $h_{\psi}$ & shared evaluator and local head of the reward model \\
$N_k$, $N_k^{+}$ & numbers of calls and of successful calls of a tool \\
$\mathrm{div}$, $\nu$, $\mu_k$ & divergence of task distributions; novelty threshold; alignment of concepts \\
$A \oplus z$, $\hat U_k$ & agent $A$ after integrating item $z$; utility estimated on validation tasks \\
$m$, $n_k^{\mathrm{val}}$, $e_k$, $\eta$ & number of items; number of validation tasks; error bound; failure probability \\
\bottomrule
\end{tabularx}
\caption{Notation.}
\label{tab:notation}
\end{table}

\section{Methods that Federate a Component of an Agent}
\label{app:taxonomy}

This appendix extends Section~\ref{sec:taxonomy}.
Table~\ref{tab:methods} lists the agent-level methods and selected precursors, and the following subsections describe each category together with the work on single agents on which it builds.

\subsection{Federated Policy Optimization}
\label{app:tax_policy}

The policy block $\Delta_k^{\pi}$ carries parameters of the policy.
Its form depends on how much access the clients have to the backbone.

\paragraph{Weights.}
When the backbone is open, clients fine-tune it on local data with parameter-efficient methods and the coordinator aggregates the updates~\cite{Zhang2024FedIT,Ye2024OpenFedLLM,Zhang2023FedPETuning,Wu2026FedLLMSurvey}.
Later work addresses clients with different adapter ranks~\cite{Wang2024FLoRA,Cho2024HetLoRA,Bai2024FlexLoRA} and the instability of adapters under differential privacy noise~\cite{sun2024improving}.
Personalized variants keep part of the adapter local~\cite{Yang2024FedDPA,Lu2024FDLoRA,Guo2025FedSALoRA}.
At the other extreme of communication cost, zeroth-order optimization lets clients and server exchange only random seeds and scalar gradients~\cite{Qin2024FedKSeed}.

The same recipe has been applied to agents.
GUI agents based on vision-language models are trained on decentralized user data~\cite{Wang2025FedMABench,Wang2025MobileA3gent,Wang2026FedGUI}.
Fed-SE fine-tunes adapters on the successful trajectories of each client~\cite{Chen2025FedSE}.
FedAgent trains the policy by reinforcement learning on local trajectories and averages the parameters~\cite{Chen2026FedAgent}.
It finds the result robust to heterogeneity of tasks but not to heterogeneity of environment dynamics.
This agrees with the theory of federated reinforcement learning~\cite{Qi2021FRLSurvey,Fan2021FedPGBR,Khodadadian2022FedRLSpeedup}, in which the suboptimality of the learned policy grows with the heterogeneity of the environments~\cite{Jin2022FedRLHetero}.
FedWave federates the role-specific adapters of a workflow whose stages belong to different organizations~\cite{Zhao2026FedWave}.
Frameworks that separate the execution of an agent from its training make such methods applicable to existing agents~\cite{Luo2025AgentLightning}.

\paragraph{Prompts.}
When the backbone is a frozen model behind an API, the policy is optimized through its prompts.
Clients can share continuous prompts~\cite{Guo2024PromptFL}, optimize prompts without gradients by querying the model~\cite{Sun2024FedBPT,Lin2023FedBBPT}, or share textual prompts that the coordinator merges by summarization~\cite{Chen2025FedTextGrad}.
The last option connects FAO to the optimization of prompts and agentic workflows~\cite{yang2023large,Khattab2024DSPy,Yuksekgonul2025TextGrad,Agrawal2026GEPA}, where the variables are text and the optimizer is often itself a language model.

\paragraph{Trade-offs.}
Weight sharing is the most mature form and the most expensive in communication: Fed-SE reports 76.3~MB per round for adapters of rank 8~\cite{Chen2025FedSE}.
Its leakage is well studied.
Gradients can be inverted to recover training text, for recent attacks exactly and for large batches~\cite{Zhu2019DLG,Gupta2022FILM,Petrov2024DAGER}, and restricting the update to adapters does not remove the risk~\cite{Sami2025PEFTLeak}.
Secure aggregation and differentially private training are the standard defenses~\cite{Bonawitz2017SecureAgg,abadi2016deep}.
Prompt sharing has a small payload and works with closed backbones, but a textual prompt is readable by anyone who receives it and may quote the examples from which it was derived.

\begin{table}[!htb]
\centering
\footnotesize
\renewcommand{\arraystretch}{1.25}
\setlength{\tabcolsep}{4pt}
\begin{tabularx}{\textwidth}{@{}>{\raggedright\arraybackslash}p{3.3cm} p{1.0cm} >{\raggedright\arraybackslash}X >{\raggedright\arraybackslash}X@{}}
\toprule
\textbf{Method} & \textbf{Block} & \textbf{Shared update} & \textbf{Aggregation} \\
\midrule
\multicolumn{4}{@{}l}{\emph{Agent-level methods}} \\
FedMABench~\cite{Wang2025FedMABench}, FedGUI~\cite{Wang2026FedGUI} & $\Delta^{\pi}$ & adapters of GUI agents based on vision-language models & standard federated algorithms \\
MobileA3gent~\cite{Wang2025MobileA3gent} & $\Delta^{\pi}$ & models trained on automatically annotated phone usage & averaging weighted by episode and step statistics \\
Fed-SE$^{\ast}$~\cite{Chen2025FedSE} & $\Delta^{\pi}$ & adapters tuned on successful trajectories & unweighted averaging of adapters \\
FedAgent$^{\ast}$~\cite{Chen2026FedAgent} & $\Delta^{\pi}$ & parameters trained by reinforcement learning & federated averaging \\
FedWave~\cite{Zhao2026FedWave} & $\Delta^{\pi}$ & role-specific adapters of a multi-stage workflow & federated adapters fused by a shared mixture-of-experts router \\
FoT$^{\ast}$~\cite{Yao2026FoT} & $\Delta^{\Mem}$ & summaries of reasoning traces & clustering and distillation into an insight library \\
FedAgentKE$^{\ast}$~\cite{Li2026FedAgentKE} & $\Delta^{\Mem}$ & knowledge units with workflows, corrections, and utility scores & clustering, selection, deduplication \\
FICAL$^{\ast}$~\cite{Wu2024FICAL} & $\Delta^{\Tool}$ & textual knowledge on tool use & concatenation \\
Typed artifacts$^{\ast}$~\cite{Chakraborty2026TypedArtifacts} & $\Delta^{\Tool}$ & schema-validated tool-routing knowledge & field-wise merging \\
FedWorld$^{\ast}$~\cite{Hou2026FedWorld} & $\Delta^{\Onto}$ & abstract transition rules & alignment of rules; scope inferred from supporting and contradicting evidence \\
FederatedSkill$^{\ast}$~\cite{Yang2026FederatedSkill} & $\Delta^{\Onto}$ & patches to the local skill library & merging and rewriting by a server-side agent, separately for each client \\
\midrule
\multicolumn{4}{@{}l}{\emph{Selected precursors}} \\
FedTextGrad~\cite{Chen2025FedTextGrad} & $\Delta^{\pi}$ & textual prompts of a language model & concatenation or summarization by a language model \\
FedRLHF~\cite{Fan2025FedRLHF} & $\Delta^{\pi}$ & policies trained on local rewards from human feedback, which stay local & federated aggregation of the policies \\
FedE4RAG$^{\ast}$~\cite{Mao2025FedE4RAG} & $\Delta^{\Mem}$ & encrypted parameters of retrievers & averaging with distillation between server and clients \\
FedBiscuit~\cite{Wu2025FedBiscuit} & $\Delta^{\Rew}$ & binary selectors that encode client preferences & aggregation within groups of similar clients \\
FedGD$^{\ast}$~\cite{Kim2026FedGD} & $\Delta^{\Rew}$ & parameters of a reward model & averaging with group-debiased client sampling \\
FedR~\cite{Zhang2022FedR} & $\Delta^{\Onto}$ & relation embeddings of a knowledge graph & secure aggregation over aligned relations \\
\bottomrule
\end{tabularx}
\caption{Agent-level FAO methods and selected precursors, as instances of the typed update of Eq.~\eqref{eq:update}. $^{\ast}$Preprint.}
\label{tab:methods}
\end{table}

\subsection{Federated Memory Optimization}
\label{app:tax_memory}

The memory block $\Delta_k^{\Mem}$ carries experience in abstracted form.
Raw trajectories are the most informative content of memory and the most sensitive, so they stay local.
What can be shared are the products that agents already derive from trajectories for their own use: reflections on failures~\cite{Shinn2023Reflexion}, insights extracted from successes and failures~\cite{Zhao2024ExpeL}, workflows induced from repeated tasks~\cite{Wang2025AWM}, and reasoning strategies~\cite{Ouyang2026ReasoningBank}.

Several systems let agents share such items through a common store: a pool of prompt--answer pairs~\cite{Gao2024MemorySharing}, a hierarchical memory for the agents of one system~\cite{Zhang2025GMemory}, and a knowledge base of workflows and fixes built from the trajectories of different agent frameworks~\cite{Tang2025AgentKB}.
Access control over memory that is shared among users has also been studied~\cite{Rezazadeh2025CollaborativeMemory}.
These systems assume that the store is trusted.

Federated methods avoid centralizing raw trajectories in such a store.
In FoT, each client uploads summaries of its reasoning traces, without the problem instances or task instructions, and the server distills them into a library of insights that all clients can use~\cite{Yao2026FoT}.
FedAgentKE exchanges knowledge units that contain reusable workflows and corrections together with a utility score, and each client retrieves the units relevant to its task~\cite{Li2026FedAgentKE}.
For document collections, federated retrieval-augmented generation~\cite{Chakraborty2025FedRAGMapping} answers queries over the collections of several providers without centralizing the documents, protected by confidential computing~\cite{Addison2024CFedRAG} or homomorphic encryption~\cite{Zhao2024FRAG}, and it trains the retrievers of the clients by federated learning~\cite{Mao2025FedE4RAG}.

\paragraph{Aggregation.}
Memory items are text, so they cannot be averaged.
The coordinator clusters the items, removes duplicates, merges items that express the same pattern, and ranks the result by utility or by the number of clients that support it.
A language model performs these operations, which makes the coordinator an agent in its own right.

\paragraph{Trade-offs.}
The payload is small: in one setting of FoT, the traces of all clients amount to 33K tokens and the distilled library to 3K tokens~\cite{Yao2026FoT}.
The leakage is of a different kind than for weights.
An item written in natural language may quote private content, retrieval stores and agent memories can be extracted by crafted queries~\cite{Zeng2024RAGPrivacy,Wang2025MEXTRA}, and embeddings can be inverted~\cite{Morris2023Vec2Text}.
Protection therefore has to act on the text.
FoT audits its uploads with a reconstruction attack~\cite{Yao2026FoT}.
Differentially private generation of demonstrations~\cite{Tang2024DPFewShot,Wu2024DPICL} and of synthetic text~\cite{Xie2024AugPE} would add a formal guarantee, but to our knowledge it has not yet been applied to the memory of agents.

\subsection{Federated Tool-Use Optimization}
\label{app:tax_tool}

The tool block $\Delta_k^{\Tool}$ carries knowledge about how to use tools.
In the horizontal setting the clients use tools of the same kind, such as customer databases, payment interfaces, or document recognition, and standard interfaces such as the Model Context Protocol~\cite{Anthropic2024MCP} make the descriptions of tools comparable across clients.

Research on single agents shows what this knowledge consists of~\cite{Qin2024ToolLearning,Qu2025ToolSurvey}.
Models learn to call tools from examples of correct calls~\cite{Schick2023Toolformer,Patil2024Gorilla,Qin2024ToolLLM}.
Concise and consistent descriptions improve the use of a tool~\cite{Yuan2025EasyTool}, and a description can be refined from the experience of calling the tool~\cite{Qu2025DRAFT}.
Agents can also create new tools~\cite{Cai2024LATM,Qian2023CREATOR,Wang2024TroVE}.

Two methods federate such knowledge.
In FICAL, each client writes a compendium that describes its tools, the scenarios in which to use them, precautions, and the coordination of several tools; the server concatenates the compendiums, and the agents retrieve from the result~\cite{Wu2024FICAL}.
Typed federated artifacts replace free text by schema-validated objects with separate fields for privacy handling, conflict resolution, and transfer across models, and they are designed for frozen models of different vendors~\cite{Chakraborty2026TypedArtifacts}.
Usage statistics, such as the success rate of a tool for a given intent, are a third kind of tool knowledge.
Only their sum is needed, so they can be protected by secure aggregation.

\paragraph{Trade-offs.}
Tool knowledge in text is small compared with model weights: FICAL reports 0.078~MB against 26{,}000~MB for baselines that exchange parameters~\cite{Wu2024FICAL}.
It is not small by nature, however, and a compendium of tool-routing experience can reach 20~MB per client and round~\cite{Chakraborty2026TypedArtifacts}.
The leakage concerns the systems of the client more than its data: tool schemas disclose the structure of internal systems, and the arguments of recorded calls may contain customer records.
Shared tool knowledge is also an attack surface.
A tool description is an instruction to the agent that reads it, and threats from malicious tool providers are documented for the Model Context Protocol~\cite{Hou2026MCP}.
Aggregation of tool knowledge therefore needs robustness against malicious contributions in addition to privacy.
Evaluation is a further open issue: the study of typed artifacts reports that a simple retrieval baseline outperforms all of its routing variants based on language models, and concludes that the benchmark cannot measure the benefit of shared routing knowledge~\cite{Chakraborty2026TypedArtifacts}.

\subsection{Federated Reward Optimization}
\label{app:tax_reward}

The reward block $\Delta_k^{\Rew}$ carries knowledge about how to judge a trajectory.
The problem has a natural division of labor.
Clients observe real outcomes and human judgments, but each client has few of them.
The coordinator can run a strong evaluator, but language models used as judges show systematic biases~\cite{Zheng2023Judging}, and on the trajectories of web agents no single judge is reliable across benchmarks~\cite{Lu2025AgentRewardBench}.

\paragraph{Federated preference and reward models.}
Existing methods address the alignment of language models with human preferences~\cite{Christiano2017DeepRLHF}.
In FedRLHF, the feedback and the reward function stay with the client, and the policies are federated~\cite{Fan2025FedRLHF}.
FedBis and FedBiscuit federate the preference model itself: each client encodes its preferences in a binary selector, and the selectors are aggregated~\cite{Wu2025FedBiscuit}.
Reward models and preference predictors are also trained by federated averaging and then personalized by local fine-tuning~\cite{Srewa2025PluralLLM,Kim2026FedGD}, and a federation of client models can serve as a panel of referees~\cite{He2024FedEvalLLM}.
To our knowledge, no method yet federates an evaluator of agent trajectories.

\paragraph{Building blocks for agents.}
Three lines of work on single agents provide the pieces.
Evaluators can be trained or prompted to judge the success of web navigation and device control~\cite{Pan2024AutonomousEvaluation}.
Since feedback arrives at the end of a trajectory, credit must be assigned to individual steps~\cite{Pignatelli2024CreditAssignment}; process reward models, which score each step, serve this purpose~\cite{Lightman2024Verify,Chae2025WebShepherd}.
Language models can write reward functions from a description of the task~\cite{Kwon2023RewardDesign,Ma2024Eureka,Xie2024Text2Reward}.
The last line suggests a form of federation with very low cost: a language model at the coordinator proposes a reward function or a rubric, each client scores the proposal against its private outcomes, and only the scores return to the coordinator.

\paragraph{Trade-offs.}
The utilities $u_k$ differ across clients, so a single global evaluator may be wrong for every client.
The methods above respond by grouping clients with similar preferences~\cite{Wu2025FedBiscuit} or by learning an initialization that adapts quickly to each client~\cite{Kim2026FedGD}.
The main leakage channel is the label: a preference or an outcome label reveals how an individual case ended.

\subsection{Federated Structured Knowledge and Skill Optimization}
\label{app:tax_onto}

The block $\Delta_k^{\Onto}$ carries structured knowledge: ontologies and knowledge graphs, rules on the business and the environment, and skills.

\paragraph{Ontologies.}
Federated learning of knowledge graph embeddings lets clients with overlapping entities improve their embeddings without exchanging triples.
In FedE, clients upload entity embeddings and the server averages them~\cite{Chen2021FedE}.
Entity embeddings, however, allow private triples to be reconstructed, and FedR therefore aggregates only relation embeddings, under secure aggregation, which lowers both leakage and communication cost~\cite{Zhang2022FedR}.
FKGE needs no server: pairs of clients translate the embeddings of aligned entities and relations into each other's space through an adversarial network with a differential privacy guarantee~\cite{Peng2021FKGE}.
FedLU keeps the local embeddings apart from the global ones and transfers knowledge between them by distillation~\cite{Zhu2023FedLU}.

These methods assume that the entities and relations of the clients have been aligned beforehand.
Ontology matching is a classical problem~\cite{Euzenat2013OntologyMatching}.
Language models and agents now reach the level of established matching systems on standard benchmarks and improve on them in complex cases~\cite{Hertling2023OLaLa,Giglou2025LLMs4OM,Qiang2024AgentOM}, and the combination of ontologies and language models has been used to harmonize data before federated learning~\cite{Kokash2026OntologyLLMHarmonization}.

\paragraph{Rules.}
FedWorld federates what agents learn about the dynamics of their environment.
Each client converts its private transitions into abstract rules.
The server aligns the rules of different clients, collects the evidence that supports and contradicts each rule, and labels the rule as shared, specific to a cluster of clients, private, or unresolved~\cite{Hou2026FedWorld}.

\paragraph{Skills.}
A skill packages a procedure so that an agent can load it when needed~\cite{Wang2024Voyager,Zheng2025SkillWeaver,Anthropic2025AgentSkills}.
In FederatedSkill, clients upload patches to their skill libraries, from which task-specific values are omitted, and an agent at the server merges and rewrites the patches into a library for each client~\cite{Yang2026FederatedSkill}.

\paragraph{Trade-offs.}
An aligned ontology helps the other categories, since insights, tool descriptions, and rubrics all refer to business concepts.
It is also expensive to build and changes slowly, so its communication cost is incurred mostly once.
Skills are cheap to exchange: FederatedSkill reports between 99 and 281~KB for a family of tasks over a complete run~\cite{Yang2026FederatedSkill}.
The leakage concerns existence and process: the entity embeddings of a client reveal which entities it knows and can expose the relations among them~\cite{Zhang2022FedR}, and a skill reveals how a client conducts its business.

\subsection{Comparison}
\label{app:tax_compare}

The five categories occupy different regions of the objective space of Eq.~\eqref{eq:fao}.

\paragraph{Communication.}
The reported payloads differ by roughly three orders of magnitude, from tens of megabytes per round for adapters to a few hundred kilobytes per run for skills, although the studies report them at different granularities.
The order of the categories is not fixed.
Accumulated text can grow large, and parameter updates can be compressed to a few kilobytes~\cite{Qin2024FedKSeed}.

\paragraph{Privacy.}
Weight sharing has the best understood attacks and defenses.
The methods that exchange text or rules~\cite{Wu2024FICAL,Yao2026FoT,Li2026FedAgentKE,Yang2026FederatedSkill,Hou2026FedWorld} rely on the fact that raw trajectories stay local, and none of them applies a formal privacy mechanism.
Two of them support the argument by an empirical audit with reconstruction attacks~\cite{Yao2026FoT,Yang2026FederatedSkill}.
The privacy problem thus shifts from the inversion of numerical updates to the disclosure of content in text, for which formal tools are less developed.

\paragraph{Access to the backbone.}
Weight sharing requires open weights and a common architecture.
The sharing of prompts, memory items, tool knowledge, rubrics, rules, and skills remains possible when every client uses a different closed model.

\paragraph{Maturity.}
Federated fine-tuning and federated knowledge graph embedding are established.
For rewards, methods exist for the alignment of language models but not yet for agent trajectories.
For memory, tool use, rules, and skills, the first federated methods have appeared, most of them as preprints.
Each of these methods optimizes performance for one component and treats privacy and communication as properties of its design.
To our knowledge, none formulates them as objectives to be traded off against performance, across components, as Eq.~\eqref{eq:fao} does.

\section{Knowledge Types in the Running Example}
\label{app:levels}

This appendix extends Section~\ref{sec:levels}.
We use one running example.
Several banks deploy agents for customer identity verification.
Each agent talks to applicants, calls tools for document recognition and registry lookup, and decides whether an application passes, fails, or is escalated to a human reviewer.
Customer records and conversation logs cannot leave the bank.

\subsection{The Four Types in the Running Example}
\label{app:levels_detail}

\paragraph{Experience.}
The sender shares what it has learned from successes and failures, stated as a pattern and not as a case.
In the running example, one bank observes that applications fail at the matching step when the name on the document and the name in the registry are transliterated differently, and that normalizing the transliteration before the lookup avoids the failure.
The shared artifact is this insight.
Methods that distill such insights from the trajectories of a single agent are well developed~\cite{Shinn2023Reflexion,Zhao2024ExpeL,Ouyang2026ReasoningBank}.
In a federation, a first distillation runs at the client, so that only summaries are uploaded, and a second one at the coordinator, which condenses the summaries of all clients into a shared library~\cite{Yao2026FoT,Li2026FedAgentKE}.

\paragraph{Skills.}
The sender shares a reusable procedure.
A skill for verifying a business license may consist of four steps: recognize the document, query the registry by license number, compare the legal representative, and escalate on mismatch.
The skill is parameterized, so that the receiver binds each step to its own tools.
Skills can be induced from trajectories as workflows or programs~\cite{Wang2024Voyager,Wang2025AWM,Zheng2025SkillWeaver}, and they can be exchanged as patches to a skill library~\cite{Yang2026FederatedSkill}.
A skill carries more structure than an insight and is more directly executable, but it also discloses more of the sender's business process.

\paragraph{Rewards.}
The sender shares what it has learned about how to judge a trajectory.
Clients hold the ground truth, such as the decisions of human reviewers, but each has few labels.
The coordinator may hold a stronger evaluator but has no access to real outcomes.
Federated training of the evaluator combines the two. It has been developed for preference models of language models~\cite{Wu2025FedBiscuit,Kim2026FedGD} and remains open for agent trajectories (Section~\ref{app:tax_reward}).
A shared rule in the running example is that a verification dialogue is non-compliant if the agent tells the applicant which field failed the check.
An improved evaluator benefits every later optimization step of the receiver, since it supplies the signal on which the policy, the memory, and the skills are tuned.

\paragraph{Ontologies.}
The sender shares how its concepts relate to those of others.
One bank records the \emph{beneficial owner} of a company, and another records the \emph{ultimate controlling person}; an alignment states whether the two are the same concept and under which conditions.
Language models and agents are increasingly used to find such correspondences~\cite{Qiang2024AgentOM}, and once concepts are aligned, their embeddings can be improved jointly without exchanging the underlying records~\cite{Chen2021FedE,Peng2021FKGE}.

\paragraph{Rules.}
The sender shares what it has learned about the effects of actions.
In the running example, one bank finds that three failed one-time codes lock the account of an applicant for 30 minutes, so that an agent should escalate instead of retrying.
FedWorld federates such knowledge as abstract transition rules and labels each rule as shared, specific to a cluster of clients, private, or unresolved~\cite{Hou2026FedWorld}.

\paragraph{Interactions among the types.}
The five types differ in function and are not ordered by abstraction.
Ontology alignment makes insights, skills, and rules easier to transfer, because these artifacts are written in terms of concepts that the receiver must ground, although natural language often aligns concepts implicitly.
The reward type acts at both ends of the loop: at the sender it selects which trajectories are worth distilling, and at the receiver it can estimate whether an adopted item helps when observed outcomes are scarce.
A federation without aligned concepts or a reliable evaluator finds it harder to tell useful knowledge from noise.

\subsection{From Shared Knowledge to Local Improvement}
\label{app:propagation}

The type of shared knowledge does not determine the component in which the receiver stores it.
An insight can be written to memory and retrieved when a similar task arrives, added to the system prompt, used as a training example for the policy, or used as a negative example for the evaluator.
Table~\ref{tab:propagation} lists the combinations.
The relation between the types of this section and the categories of Section~\ref{sec:taxonomy} is therefore many-to-many: the taxonomy describes what the federation maintains, and the types describe what travels between clients.

\begin{table}[!htb]
\centering
\footnotesize
\renewcommand{\arraystretch}{1.25}
\setlength{\tabcolsep}{4pt}
\begin{tabularx}{\textwidth}{@{}l *{5}{>{\raggedright\arraybackslash}X}@{}}
\toprule
\textbf{Shared type} & \textbf{Policy} $\pi_k$ & \textbf{Memory} $\Mem_k$ & \textbf{Tools} $\Tool_k$ & \textbf{Reward} $\Rew_k$ & \textbf{Structured knowledge} $\Onto_k$ \\
\midrule
Experience &
demonstrations for prompting or fine-tuning &
insights stored for retrieval &
corrected tool descriptions, known failure cases &
positive and negative examples for the evaluator &
new rules derived from recurring patterns \\
Skill &
procedures placed in the prompt &
procedures retrieved on demand &
compositions of tools &
checkpoints that a correct trajectory must pass &
extension of the skill library \\
Rule &
preconditions of actions stated in the prompt &
rules retrieved for similar states &
expected effects and failure modes of tool calls &
penalties for actions with known bad effects &
extension of the rule base \\
Reward &
training signal for policy optimization &
filter for what is written to memory &
reliability scores of tools &
parameters or rubrics of the evaluator &
validation of new rules and skills \\
Ontology &
shared vocabulary in prompts &
retrieval keys over aligned concepts &
mapping of tool arguments to shared concepts &
criteria stated over shared concepts &
aligned concepts and relations \\
\bottomrule
\end{tabularx}
\caption{How knowledge of each type (rows) can improve each component of the receiving agent (columns). One shared artifact can improve several components.}
\label{tab:propagation}
\end{table}

\section{Extensions of the Formulation}
\label{app:extensions}

Three further concerns fit into the same formulation.
Inference cost (tokens, latency, and fees for tool calls) can be charged inside $u_k$ or added as a fourth objective.
Non-stationarity can be modeled by letting the task distribution $\Dist_k^t$ depend on the round, in which case utility is evaluated on the current distribution and the agent must adapt without losing capabilities it still needs.

Protocols without a coordinator, such as the exchange of embeddings between pairs of clients in FKGE~\cite{Peng2021FKGE}, compute $\Agg$ among the clients.
They are covered if the communication cost counts every message $\mathrm{msg}$ in the set $\mathcal{C}_t(\Phi)$ of messages that clients exchange in round $t$,
\begin{equation}
\label{eq:comm_general}
\Comm(\Phi) = \sum_{t=0}^{T-1} \sum_{\mathrm{msg} \in \mathcal{C}_t(\Phi)} |\mathrm{msg}|,
\end{equation}
which reduces to Eq.~\eqref{eq:objectives} for the star topology with a coordinator that the main text considers.

\section{Proofs}
\label{app:proofs}

\begin{proposition}[Scalarized forms]
\label{prop:inclusion}
Assume that the maximizers of Eqs.~\eqref{eq:fao_weighted} and~\eqref{eq:fao_constrained} exist.
Every maximizer of Eq.~\eqref{eq:fao_weighted} with $\lambda_p, \lambda_c > 0$ is Pareto optimal, and every Pareto-optimal protocol $\Phi^{\star}$ maximizes Eq.~\eqref{eq:fao_constrained} with the budgets $\bar\epsilon_k = \Leak_k(\Phi^{\star})$ and $B = \Comm(\Phi^{\star})$.
\end{proposition}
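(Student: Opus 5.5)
The plan is to prove the two claims separately by contradiction, working directly from the definition of Pareto optimality in Eq.~\eqref{eq:pareto} with $F_1 = \frac{1}{K}\sum_k U_k(A_k^{+};\Dist_k)$, $F_2 = -\frac{1}{K}\sum_k \Leak_k(\Phi)$ and $F_3 = -\Comm(\Phi)$. Both arguments range over the same feasible set of protocols. No convexity is needed; the argument is the standard one of~\cite{Miettinen1999NMOO}, adapted to the fact that the constrained form uses \emph{per-client} leakage budgets while $F_2$ \emph{averages} leakage.

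\emph{Weighted form.} I would first rewrite the objective of Eq.~\eqref{eq:fao_weighted} as $W(\Phi) = F_1(\Phi) + \lambda_p F_2(\Phi) + \lambda_c F_3(\Phi)$, whose weights $(1,\lambda_p,\lambda_c)$ are all strictly positive. Let $\Phi$ be a maximizer of $W$, and suppose some $\Phi'$ dominates it, so that $F_i(\Phi') \ge F_i(\Phi)$ for all $i$ and $F(\Phi') \neq F(\Phi)$. The condition $F(\Phi') \neq F(\Phi)$ together with the componentwise inequalities forces at least one of them to be strict. Multiplying each inequality by its positive weight and summing then gives $W(\Phi') > W(\Phi)$, which contradicts maximality. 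Hence $\Phi$ is Pareto optimal. The only point to state carefully is this strictness step, where positivity of every weight, including the implicit weight $1$ on $F_1$, is used.

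\emph{Constrained form.} Fix a Pareto-optimal $\Phi^{\star}$ and set $\bar\epsilon_k = \Leak_k(\Phi^{\star})$ and $B = \Comm(\Phi^{\star})$. With these budgets $\Phi^{\star}$ is feasible for Eq.~\eqref{eq:fao_constrained}. Suppose a feasible $\Phi'$ had $F_1(\Phi') > F_1(\Phi^{\star})$. Feasibility gives $\Leak_k(\Phi') \le \Leak_k(\Phi^{\star})$ for every $k$. Averaging over $k$ yields $F_2(\Phi') \ge F_2(\Phi^{\star})$, and the budget constraint yields $F_3(\Phi') \ge F_3(\Phi^{\star})$. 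Together with the strict gain in $F_1$, $\Phi'$ dominates $\Phi^{\star}$, which is a contradiction. So $\Phi^{\star}$ attains the constrained maximum. This step is the one I expect to need the most care, because the feasibility constraints are per client while $F_2$ is an average. The inference only runs in the direction ``per-client bounds imply a bound on the average''. That is exactly the direction needed, so the mismatch causes no gap.

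Finally, I would remark that the existence assumption is used only to make the sets $\mathcal{Z}_{\mathrm{w}}$ and $\mathcal{Z}_{\mathrm{c}}$ well defined. Taking the union over all weights and all budgets then gives the inclusions $\mathcal{Z}_{\mathrm{w}} \subseteq \mathcal{Z}^{\star} \subseteq \mathcal{Z}_{\mathrm{c}}$ of Eq.~\eqref{eq:inclusion}.
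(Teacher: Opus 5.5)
Your proof is correct and follows the paper's argument essentially step for step. In the weighted form, strict positivity of all three weights (including the implicit unit weight on $F_1$) makes any dominating protocol strictly better in the weighted objective. In the constrained form, the per-client budgets fixed at $\Phi^{\star}$ imply the averaged leakage and communication inequalities, so any feasible protocol with larger $F_1$ would dominate $\Phi^{\star}$.
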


\begin{proof}
Eq.~\eqref{eq:fao_weighted} maximizes $w(\Phi) = F_1(\Phi) + \lambda_p F_2(\Phi) + \lambda_c F_3(\Phi)$.
If a maximizer $\Phi$ were dominated by some $\Phi'$, then $F_i(\Phi') \ge F_i(\Phi)$ for all $i$ with at least one strict inequality, and since all weights are positive, $w(\Phi') > w(\Phi)$, a contradiction.
For the second claim, $\Phi^{\star}$ is feasible for the stated budgets.
If a feasible $\Phi'$ had $F_1(\Phi') > F_1(\Phi^{\star})$, then $\Leak_k(\Phi') \le \Leak_k(\Phi^{\star})$ for all $k$ would give $F_2(\Phi') \ge F_2(\Phi^{\star})$, and $\Comm(\Phi') \le \Comm(\Phi^{\star})$ would give $F_3(\Phi') \ge F_3(\Phi^{\star})$, so $\Phi'$ would dominate $\Phi^{\star}$, a contradiction.
\end{proof}

The constrained form bounds the leakage of every client, which is stronger than a bound on the average $F_2$, and it still reaches every Pareto-optimal protocol.
The converse inclusion fails in general, since a maximizer of Eq.~\eqref{eq:fao_constrained} may be only weakly Pareto optimal~\cite{Miettinen1999NMOO}.

\begin{proposition}[Gated adaptation]
\label{prop:gate}
Let the items $z_1, \dots, z_m$ be fixed before validation and let $\oplus$ be deterministic.
Let $\hat U_k(A)$, computed once for each agent $A$, be the mean of $u_k$ over $n = n_k^{\mathrm{val}}$ tasks drawn independently from $\Dist_k$, independent of the items and of $A_k^{\mathrm{loc}}$, with one trajectory sampled from $p_A$ for each task.
With $e_k$ as in Eq.~\eqref{eq:gate_bound}, the gate of Eq.~\eqref{eq:gate} yields $\mathrm{CG}_k \ge 0$ with probability at least $1 - \eta$.
\end{proposition}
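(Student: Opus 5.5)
The plan is to reduce the statement to a uniform concentration event over the finite set of agents the gate can reach, and then to argue deterministically on that event. Since the items $z_1, \dots, z_m$ are fixed before validation and $\oplus$ is deterministic, every agent the gate can produce has the form $A_k^{\mathrm{loc}} \oplus z_{i_1} \oplus \cdots \oplus z_{i_r}$ with $i_1 < \cdots < i_r$. It is therefore indexed by a subset of $\{1,\dots,m\}$. This gives a fixed, data-independent family $\mathcal{A}$ of at most $2^m$ agents, which includes $A_k^{\mathrm{loc}}$ itself (the empty subset). The point of fixing the family in advance is that which agent the gate actually visits depends on the validation data, but the family does not. This is what allows a union bound instead of an adaptive-analysis argument.

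\textbf{Step 1: concentration for each agent.} For each $A \in \mathcal{A}$, the estimate $\hat U_k(A)$ is the mean of $n = n_k^{\mathrm{val}}$ independent $[0,1]$-valued variables, one per task. Each variable is $u_k(\tau)$ with $x \sim \Dist_k$ and $\tau \sim p_A(\cdot\mid x)$, so its mean is exactly $U_k(A;\Dist_k)$. Independence holds because the tasks are independent of the items and of $A_k^{\mathrm{loc}}$, and $A$ is fixed. The assumption that $\hat U_k(A)$ is computed once per agent makes it a single well-defined random variable, rather than one that is resampled each time the gate queries $A$. The two-sided Hoeffding bound then gives $\Pr[|\hat U_k(A) - U_k(A)| > e_k] \le 2e^{-2ne_k^2}$.

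\textbf{Step 2: union bound.} Taking a union bound over $|\mathcal{A}| \le 2^m$ agents, the failure probability is at most $2^{m+1} e^{-2ne_k^2}$. With $e_k$ as in Eq.~\eqref{eq:gate_bound} this equals exactly $\eta$. So with probability at least $1-\eta$ the good event $E$ holds: every reachable agent satisfies $|\hat U_k(A) - U_k(A)| \le e_k$.

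\textbf{Step 3: deterministic argument on $E$.} On $E$ I show by induction that $U_k(A_k^{(i)}) \ge U_k(A_k^{(i-1)})$ for every $i$. If item $i$ is rejected, the agent is unchanged and the inequality holds with equality. If item $i$ is accepted, the gate's threshold gives $\hat U_k(A_k^{(i)}) - \hat U_k(A_k^{(i-1)}) \ge 2e_k$. Applying $E$ to both agents yields
\[
U_k(A_k^{(i)}) \ge \hat U_k(A_k^{(i)}) - e_k \ge \hat U_k(A_k^{(i-1)}) + e_k \ge U_k(A_k^{(i-1)}).
\]
Chaining these inequalities from $A_k^{(0)} = A_k^{\mathrm{loc}}$ to $A_k^{(m)} = A_k^{+}$ gives $\mathrm{CG}_k \ge 0$ on $E$, which proves the claim. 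The same chain in fact shows that each accepted item yields a strict true improvement.

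\textbf{Main obstacle.} The delicate step is justifying that Hoeffding applies to $\hat U_k(A)$ even though the agent being evaluated is selected adaptively from the same validation data. I would handle this exactly as above: apply Hoeffding only to the fixed agents in $\mathcal{A}$, each of whose estimates is a genuine i.i.d.\ mean, and let the union bound absorb the data-dependent selection. This relies on the hypotheses that the items are fixed before validation, that $\oplus$ is deterministic, and that each agent's estimate is computed once. I would remark that the count $2^m$ is generous, since only $m+1$ agents are actually visited, but the visited set is data-dependent and so cannot be used for the union bound.
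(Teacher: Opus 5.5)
Your proof is correct and follows the paper's argument step for step: the fixed family of at most $2^m$ subset-indexed agents, Hoeffding's inequality for each, a union bound that the choice of $e_k$ makes exactly $\eta$, and a telescoping argument on the good event $E$. One small caveat concerns your closing aside that each accepted item gives a strict true improvement: it does not follow from the chain as written, because on $E$ (defined with $|\hat U_k(A) - U_k(A)| \le e_k$) all three inequalities can be tight; strictness needs the event $|\hat U_k(A) - U_k(A)| < e_k$ for all reachable $A$, which the $\ge$ form of Hoeffding's inequality also gives with probability at least $1-\eta$.
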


\begin{proof}
The gate processes the items in order, so it can only reach agents of the form $A_k^{\mathrm{loc}} \oplus z_{i_1} \oplus \dots \oplus z_{i_s}$ with $i_1 < \dots < i_s$, one for each subset of the items.
Let $\mathcal{C}$ be this set of at most $2^m$ agents.
For a fixed $A \in \mathcal{C}$, $\hat U_k(A)$ is the mean of $n$ independent variables in $[0,1]$ with mean $U_k(A)$, so Hoeffding's inequality gives
\begin{equation*}
\Pr\big[\,|\hat U_k(A) - U_k(A)| > e_k\big] \ \le\ 2\exp\!\big(-2 n e_k^2\big) \ =\ 2 \cdot 2^{-(m+1)}\, \eta \ =\ 2^{-m}\, \eta .
\end{equation*}
A union bound over $\mathcal{C}$ shows that the event $E = \{|\hat U_k(A) - U_k(A)| \le e_k \ \forall A \in \mathcal{C}\}$ has probability at least $1 - \eta$.
On $E$, every accepted step satisfies
\begin{equation*}
U_k\big(A_k^{(i)}\big) - U_k\big(A_k^{(i-1)}\big) \ \ge\ \hat U_k\big(A_k^{(i)}\big) - \hat U_k\big(A_k^{(i-1)}\big) - 2 e_k \ \ge\ 0,
\end{equation*}
and every rejected step leaves the agent unchanged.
Summing over the steps gives $\mathrm{CG}_k = U_k(A_k^{+}) - U_k(A_k^{\mathrm{loc}}) = \sum_{i=1}^{m} \big[U_k(A_k^{(i)}) - U_k(A_k^{(i-1)})\big] \ge 0$ on $E$.
\end{proof}

\end{document}